\def\BState{\State\hskip-\ALG@thistlm}
\def\<{{\langle}}
\def\>{{\rangle}}
\renewcommand{\exp}[1]{\operatorname{exp}\left(#1\right)} % Exponential
\providecommand{\argmax}{\mathop\mathrm{arg\, max}} % Defining math symbols
\newcommand{\indi}{\mathds{1}}
\def\P{\mathbb{P}} % Probability symbol
\newcommand{\s}{\mathcal{S}}%calligraphic S
\newcommand{\sss}{\mathscr{S}}%script S
\newcommand{\rr}{\mathcal{R}}
\newcommand{\tcal}{\mathcal{T}}
\newcommand{\cc}{\mathcal{C}}
\newcommand\smallo{
  \mathchoice
    {{\scriptstyle\mathcal{O}}}% \displaystyle
    {{\scriptstyle\mathcal{O}}}% \textstyle
    {{\scriptscriptstyle\mathcal{O}}}% \scriptstyle
    {\scalebox{.6}{$\scriptscriptstyle\mathcal{O}$}}%\scriptscriptstyle
  }
\newtheoremstyle{dotless}{}{}{\itshape}{}{\bfseries}{}{ }{}
\theoremstyle{dotless}
\newtheorem*{defi}{Definition}
\theoremstyle{plain}
\newtheorem{myth}{Theorem}
\newtheorem{myprop}[myth]{Proposition}
\newtheorem{mylem}[myth]{Lemma}
\newtheoremstyle{named}{}{}{\itshape}{}{\bfseries}{.}{.5em}{#1 #3}
\theoremstyle{named}
\newtheorem*{namthm*}{Theorem}
\crefname{myth}{Theorem}{Theorems} 
\newcounter{parentnumber}
\newcommand{\err}{\mathcal{E}}
\newcommand{\res}{\mathrm{res.}}
\newcommand{\ddd}[1]{\Delta_{m, \sss}(\delta/#1)}
\newcommand{\vc}{\textsc{vc}}
\title{Selective Classification via One-Sided Prediction}
\author{ Aditya Gangrade \\ {\small Boston University} \\ {\small \texttt{gangrade@bu.edu}} \and Anil Kag \\ {\small Boston University}\\ {\small \texttt{anilkag@bu.edu}} \and Venkatesh Saligrama \\ {\small Boston University} \\ {\small \texttt{srv@bu.edu}}}
\date{\vspace{-12pt}}
\begin{document}
\maketitle

\begin{abstract}
We propose a novel method for selective classification (SC), a problem which allows a classifier to abstain from predicting some instances, thus trading off accuracy against coverage (the fraction of instances predicted). In contrast to prior gating or confidence-set based work, our proposed method optimises a collection of class-wise decoupled one-sided empirical risks, and is in essence a method for explicitly finding the largest decision sets for each class that have few false positives. This one-sided prediction (OSP) based relaxation yields an SC scheme that attains near-optimal coverage in the practically relevant high target accuracy regime, and further admits efficient implementation, leading to a flexible and principled method for SC. We theoretically derive generalization bounds for SC and OSP, and empirically we show that our scheme strongly outperforms state of the art methods in coverage at small error levels.
\end{abstract}

\section{Introduction}
Selective Classification is a classical problem that goes back to the work of Chow \cite{Chow57, chow1970optimum}. The setup allows a learner to classify a query into a class, or to abstain from doing so (we also call this `rejecting' the query). This abstention models real-world decisions to gather further data/features, or engage experts, all of which may be costly. Such considerations commonly arise in diverse settings, including healthcare\footnote{For example, when deciding if a mammary mass is benign or malignant, a physician may predict based on ultrasound imaging tests, and, in more subtle cases, abstain and refer the patient to a specialist, or recommend specialised imaging such as CT scans.}, security, web search, and the internet of things (\cite{JMLR:v15:xu14a, zhu_iot}), all of which require very low error rates (lower even than the Bayes risk of standard classification). The challenge of SC is to attain such low errors while keeping coverage (i.e., the probability of not rejecting a point) high. This is a difficult problem because any choice of what points to reject is intimately coupled with the classifiers chosen for the remaining points.

The most common SC method is via `gating,' in which rejection is explicitly modelled by a binary-valued function $\gamma,$ and classification is handled by a function $\pi$. An instance, $x$, is predicted as $\pi(x)$ if  $\gamma(x) =1,$ and otherwise rejected. Within this formulation, recent work has proposed a number of methods, ranging from alternating minimisation based joint training, to the design of new surrogate losses, and of new model classes to accommodate rejection. {Despite this increased complexity, these methods lack power, as shown by the fact that they do not significantly outperform na\"{i}ve schemes that rely on abstaining on the basis of post-hoc uncertainty estimates for a trained standard classifier. This represents a significant gap in the practical effectiveness of selective classification.}
%Despite this increase in complexity, the SOTA methods so engendered lack power. This is illustrated by the fact that
%}% These na\"{i}ve scheme uses the SC structure very weakly, and so one expects an effective SC scheme to outperform it.} 

\noindent \textbf{Our Contributions.} We describe a new formulation for the SC problem, that comprises of directly learning \emph{disjoint} classification regions $\{\s_k\}_{k \in \mathcal{Y}}$, each of which corresponds to labelling the instance as $k$ respectively. Rejection is \emph{implicitly defined} as the gap, i.e., the set $\rr= \mathcal{X} \setminus \bigcup \s_k$. We show that this formulation is equivalent to earlier approaches, thus retaining expressivity.

The principal benefit of our formulation is that it admits a natural relaxation, via dropping the disjointness constraints, into \emph{decoupled} `one-sided prediction' (OSP) problems. We show that at design error $\varepsilon,$ this relaxation has the coverage optimality gap bounded by $\varepsilon$ itself, and so the relaxation is statistically efficient in the practically relevant high target accuracy regime. 

We pose OSP as a standard constrained learning problem, and due to the decoupling property, they can be approached by standard techniques. We design a method that efficiently adjusts to inter-class heterogeneity by solving a minimax program, controlled by one parameter that limits overall error rates. This yields a powerful SC training method that does not require designing of special losses or model classes, instead allowing use of standard discriminative tools. %Further, the method is flexible in that it can be used on top of existing representations of the data, or can be used to learn new problem-specific representations.

{To validate these claims, we implement the resulting SC methods on benchmark vision datasets - CIFAR-10, SVHN, and Cats \& Dogs. We empirically find that the OSP-based scheme has a consistent advantage over SOTA methods in the regime of low target error. In particular, we show a clear advantage over the na\"{i}ve scheme described above, which in our opinion is a significant first milestone in the practice of selective classification.}

\subsection{Related Work}%\label{sec:rw}

\textbf{State of the Art (SOTA) methods:} The SOTA, in terms of performance, for SC is encapsulated by three methods. The Na\"{i}ve method, i.e., rejecting when the output of a soft classifier is non-informative (e.g.~classifier margin is too small),
%if $\max_{k} f_k(x)$ is too small for a soft $K$-class classifier $f$), 
and this is surprisingly effective when implemented for modern model classes such as DNNs (\cite{geifman2017selective}). The only other methods that can (marginally) beat this are due to Liu et al., who design a loss function for DNNs \cite{deep_gamblers}, and to Geifman \& El-Yaniv, who design a new architecture for DNNs that incorporates gating \cite{geifman2019selectivenet}. 

The methods of \cite{deep_gamblers, geifman2019selectivenet} are both based on the \textbf{Gating formulation}, mentioned earlier. This formulation was popularised by Cortes et al. \cite{cortes2016learning}, although similar proposals appeared previously \cite{el2010foundations, wiener2011agnostic}. A number of papers have since extended this approach, e.g.~designing training algorithms via alternating minimisation \cite{nan2017adaptive, nan2017dynamic}, designing loss functions \cite{deep_gamblers, Ni_calibration_CWR, ramaswamy2018}, and model classes, such as an architecturally augmented deep neural network (DNN) \cite{geifman2019selectivenet}. In contrast, our work develops an alternate formulation that directly solves SC without use of specialised losses or model classes.

The na\"{i}ve method has its roots in the \textbf{Direct SC} formulation, which is based on learning a function $f: \mathcal{X} \to \{1,\dots,K, ?\}$ (where $?$ denotes rejection), and is pursued by Wegkamp and coauthors \cite{herbei-wegkamp, bartlett2008classification, wegkamp2007lasso, wegkamp-yuan-svm, Wegkamp-Yuan-Convex}. The main disadvantage of this formulation is that the methods emerging from it consider very restricted forms of rejection decisions, e.g.~ $\{|\phi - \nicefrac{1}{2}|< \delta \}$, where $\phi$ is a softmax output of a binary classifier.% That said, recent SOTA work like \cite{deep_gamblers}, while ostensibly gating-based, is related to the direct trichotomy in that a single function with soft outputs $(\phi_+, \phi_-, \phi_?)$ is learnt, from which hard rejection decisions are obtained by thresholding $\phi_?.$

Rather than including an explicit gate, our formulation and method for learning abstaining classifiers uses an \textbf{implicit abstention criterion}, by modelling regions of high confidence directly. Such an approach was theoretically considered by Kalai et al.~\cite{kalai2012reliable} for the binary setting, although an implementable methodology was not developed from the same. This paper also suggests a decoupled approach to learning. Independently and concurrently of our work, Charoenphakdee et al.~\cite{charoenphakdee2021classification} also propose an implicitly gated method by observing that in the situation where abstention has a fixed cost, the Bayes optimal classifier can be derived using a cost-sensitive objective. They develop this into a methodology for learning selective classifiers that bears significant commonalities to ours in the structure of the losses constructed and approach taken, although their exploration is focused on the situation with a fixed cost for abstention (the so-called Chow loss). Together these papers suggest that the approach we design can be motivated in multiple ways.\footnote{NB. This paragraph was added in version 4 of this paper, to describe prior related work that we were unaware of in the first case, and to describe concurrent related work that bears methodological similarities.}

An alternate \textbf{Confidence Set formulation} (which also features an implicit abstention criterion) has been pursued in the statistics literature \cite{Lei_class_w_confidence, denis-hebiri} (for the binary case), and involves learning sets $\{\cc_k\}_{k \in [1:K]}$ such that $\bigcup \cc_k= \mathcal{X},$ and each $\cc_k$ covers class $k$ in the sense $\P(\cc_k|Y = k)$ is large\footnote{More accurately, this precise formulation has not appeared for the multiclass setting, and only appears for the binary problem in \cite{Lei_class_w_confidence, denis-hebiri}. Here we are expressing the natural multiclass extension of this, that turns out to be equivalent to selective classification (\S\ref{sec:post_form_comparison}). The existing literature instead pursues the multiclass extension to LASV classification, as mentioned above. Please see \S\ref{appx:confidence_set_rant} for a detailed discussion}. Points which lie in two or more of the $\cc_k$s are rejected, and otherwise points are labelled according to which $\cc_k$ they lie in. While this has subsequently been extended to the multiclass setting \cite{LeiWasserman,  denis_hebiri_multi, denis_hebiri_chzhen_minimax}, these papers study `least ambiguous set-valued classification', which is a different problem from selective classification and does not express it well (see \S\ref{appx:confidence_set_rant}). A limitation of existing work in this framework is their reliance on estimating the regression function $\eta(x):= \P(Y = k|X = x)$ to ensure efficiency. Proposals typically go via using non-parametric estimates of $\eta$, which are then filtered. On a practical level, this reliance on estimation reduces statistical efficiency, and on a principled level, this violates Vapnik's maxim of avoiding solving a more general problem as an intermediate step to solving a given problem (\cite[\S1.9]{vapnikbook}). 

While our formulation is most closely related to the confidence set formulation, and is equivalent to a change of variables of this (\S\ref{sec:post_form_comparison}), it is directly motivated. Furthermore, our framework naturally leads to relaxations to OSP that let us study discriminative methods on high-dimensional datasets and large model classes, which are unexplored in these works.

In passing, we mention the \emph{uncertainty estimation} (UE), and \emph{budget learning} (BL) problems. UE involves estimating model uncertainty at any point \cite{gal2016dropout, lakshminarayanan2017simple}, which can plug into both na\"{i}ve classifiers, and the other methods. As such, UE is a vast generalisation of SC.  BL is a restricted form of SC that aims at reaching the accuracy of a complex model using simple functions, and is relevant for efficient inference constraints. %(\cite{gangrade2020budget}).

We highlight a recent \emph{decoupling-based} method for BL that involves the first and last authors \cite{gangrade2020budget}. The present work can be seen as considerable extension of this paper to full SC. While the broad strategies of decoupling schemes are similar, significant differences arise since much of the structure developed in the prior work does not generalise to SC, and development of new forms is necessary. Additionally, our experiments study large multiclass models going beyond best achievable standard accuracy, while the previous work only studies small binary models getting to standard accuracy achievable by larger models. 
\vspace{0pt}
\section{Formulation and Methods}\label{sec:form}
\vspace{0pt}
\noindent \textbf{Notation.} Probabilities are denoted as $\P$, random variables are capitalised letters, while their realisations are lowercase ($X$ and $x$). Sets are denoted as calligraphic letters, and classes of sets as formal script ($\s \in \sss$). Parameters are denoted as greek letters. For a set $\mathcal{S} \subset \mathcal{X},$ $\mathbb{P}(\mathcal{S})$ is shorthand for $\mathbb{P}(X \in \mathcal{S})$. 

We adopt the supervised learning setup - data is distributed according to an unknown joint law $\P$ on $\mathcal{X} \times \mathcal{Y}$, and we observe $n$ i.i.d.~points $(X_i, Y_i) \sim \P$. For $K$ classes, we set $\mathcal{Y} = [1:K],$ where $K$ is a constant independent of $|\mathcal{X}|$. We use $\sss$ to denote the class of sets from which we learn classifiers.%\textcolor{red}{To allow feasibility of the various programs, we make the (very mild) assumptions that $\varnothing \in \sss$, and further that for any choice of $K-1$ pairwise disjoint sets $\mathcal{A}_1,\dots, \mathcal{A}_{K-1} \in \sss,$ it holds that $\left(\bigcup_{j = 1}^{K-1} \mathcal{A}_j\right)^c \in \sss$. This amounts to demanding that $\sss$ can express a $K$-ary classifier of the form `label points in $\mathcal{A}_k$ as $k$ for $k\le K-1$, and label the rest $K$,' and we thus call it the $K$-classifier condition.}% Measure theoretic considerations will be suppressed throughout, as is common in learning theory.

\subsection{Formulation of SC}

We set up the SC problem (Fig.~\ref{fig:comparison_of_formulations}(top) illustrates binary case) as that of directly recovering disjoint classification regions, $\{ \s_k\}_{k \in [1:K]}$ from a class of sets $\sss,$ under the constraint that the error rate is smaller than a given level $\varepsilon$, which we call the target error. Each such $K$-tuple of sets induces two events of interest - the rejection event, and the error event. \begin{align*} 
    \rr_{\{\s_k\}} &:= \left\{ X \in \left(\bigcup \s_k\right)^c \right\} \\
    \err_{\{\s_k\}} &:= \bigcup \{ X \in \s_k, Y \neq k \}.
\end{align*} 
We will usually suppress the dependence of $\rr, \err$ on $\{\s_k\}.$ Notice further that $\err$ decomposes naturally into events that depend only on one of the $\s_k$s. We will call these `one-sided' error events \[ \err^k_{\s_k} = \{X \in \s_k, Y \neq k\}. \]
With the above notation, we pose the problem as a maximisation program. The value of this is said to be the \emph{coverage at target error level $\varepsilon$}, denoted  $C(\varepsilon; \sss)$.\begin{align}\label{SC} 
C(\varepsilon; \sss) = \max_{ \{\s_k\}_{k \in [1:K]} \in \sss}  & \sum_{k = 1}^K \P(\s_k) \tag{SC}\\ \textrm{s.t.}\quad&\P(\err_{\{\s_k\}}) \leq \varepsilon, \notag \\ &\P(\bigcup_{k,k'\neq k} \s_k \cap \s_{k'}) = 0,\notag  
\end{align} where the final constraint is expressing the fact that the $\s_k$s must be pairwise disjoint. Note that if $\varepsilon$ equals the Bayes risk of standard classification with $\sss,$ then (\ref{SC}) recovers the standard solution and coverage $1$.

\emph{Example.} Consider the case of $K = 2$ where $\P_X$ is uniform on $[0,1],$ $\P(Y = 1|X = x) = x,$ and $\sss$ consists of single threshold sets $\{ x > t\}, \{x \le t\}$ for $t \in [0,1]$. The Bayes risk of standard classification is $\nicefrac{1}{4}$. For any $\varepsilon < \nicefrac{1}{4},$ the coverage at level $\varepsilon$ is $C(\varepsilon;\sss) = 2\sqrt{\varepsilon}$, which is attained by $\s_1 = \{ x > 1 - \sqrt{\varepsilon}\}, \s_2 = \{x \le \sqrt{\varepsilon}\}$.

\subsubsection{Design choices}\label{sec:alternate_goals}
We outline alternate ways to set up the SC problem that we don't pursue in this paper.

\emph{Form of constraints.} In (\ref{SC}), we maximise coverage, while controlling error, which is \emph{error-constrained SC}. Alternately one can pursue the equivalent \emph{coverage constrained SC} problem - minimising $\P(\err)$ subject to $\P(\rr) \le \varrho.$ 

As illustrated in the starting example, our interest in SC is driven by the desire to attain very small error rates. We thus find the error constrained form of SC more natural, and since we needed to select one of the two for the sake of brevity, we adopt it in the rest of the paper.\footnote{This is not to imply that the coverage constrained form cannot be more appropriate for some settings. Which one to use in practice is ultimately a problem specific choice.} We note that our method is also effective for coverage-constrained SC, as shown empirically in \S\ref{sec:exp}.

\emph{Error criterion.} In (\ref{SC}), we constrain the raw error $\P(\err).$ This has the benefit of being both natural, since it directly controls the standard error metric, and further, simple. Alternate forms of the error metric have been studied in the literature - e.g.  conditioning on acceptance ($P(\err|\rr^c)$) \cite{geifman2019selectivenet}; and separately constrained class conditionals ($P(\err|Y = k) \le \varepsilon_k$) \cite{Lei_class_w_confidence}. Most of the development below can be adapted to these settings with minimal changes, and we restrict attention to $\P(\err)$ for concreteness.

\subsection{Relaxation and One-sided Prediction}\label{sec:OSP}

(\ref{SC}) couples the $\s_k$s via the $\P$-a.s. disjointness constraint. We now develop a decoupling relaxation.

To begin, note that we may decouple the error constraint by introducing variables that trades off the one-sided error rates as below. This program is equivalent to (\ref{SC}) in the sense that they have the same optimal value, and the same $\{\s_k\}$ achieve this value. \begin{align} \label{SC:expanded}
\max_{\substack{\{\s_k\} \in \sss, \{\alpha_k\}\in [0,1]}}& \sum_{k = 1}^K\P(\s_k) \tag{SC-expanded}\\ \textrm{s.t. }\quad &\forall k: \P(\err^k_{\s_k} ) \leq \alpha_k \varepsilon,\quad \sum \alpha_k \le 1, \notag \\ &\P(\bigcup_{k,k'\neq k} \s_k \cap \s_{k'}) = 0.\notag 
\end{align}

Our proposed relaxation is to simply drop the final constraint. The resulting program may be decoupled, via a search over the variables $\alpha_k$ into $K$ \emph{one-sided prediction} (OSP) problems:\begin{align}
L_k(\varepsilon_k; \sss) &= \max_{\substack{\s_k \in \sss} }  \P(\s_k) \textrm{ s.t. } \P(\err^k_{\s_k}) \leq \varepsilon_k  \tag{OSP-k} \label{OSPk}
\end{align} Notice that the above OSPk problem demands finding the \emph{largest} set $\s_k$ that has a low false alarm probability for the null hypothesis $Y \neq k$. Structurally this is the opposite to the more common Anomaly Detection problem, which demands finding the smallest set with a low missed detection probability. 

We note that while we decouple the SC problem completely above, the main benefit is the removal of the intersection constraint, which is the principal difficulty in SC. The sum error constraint is benign, and for reasons of efficiency we will reintroduce it in \S\ref{sec:prac_meta}.

Continuing, observe that the sets recovered from the above problems may overlap, which introduces an ambiguous region. This overlap region is necessarily of small mass (Prop.~\ref{prop:OSP_not_lossy}), and so may be dealt with in any convenient way. Theoretically we break ambiguities in the favour of the smallest label. These sets need not belong to $\sss$ anymore, and so this is an (weakly) improper classification scheme. 

Overall this gives the following infinite sample scheme:
\begin{itemize}[leftmargin = 0.15in, nosep]
    \item For each feasible $\alpha \in [0,1]^K,$ solve for $\{L_k(\alpha_k \varepsilon)\}$ for each ${k \in [1:K]}$. Let $\{\mathcal{T}_k^\alpha\}$ be the recovered sets.
    \item Let $\s_k^{\alpha} = \mathcal{T}_k^{\alpha} \setminus \left(\bigcup_{k'< k} \mathcal{T}_{k'}^{\alpha}\right)$.
    \item Return the $\{\s_k^\alpha\}$ that maximises $\sum_k \P(\s_k^\alpha)$ over $\alpha$.
\end{itemize}

At small target error levels, which is our intended regime of study, the resulting sets are guaranteed to not be too lossy, as in the following statement. The above is shown (in \S\ref{appx:prop1pf}) by arguing that the mass of the overlap between the OSP solutions (the $\mathcal{T}_k$) is at most $2\varepsilon.$ Empirically this is even lower, see Table \ref{tab:percentage-overlap-between-two-one-sided-learners}. \begin{myprop}\label{prop:OSP_not_lossy} 
If $\{\s_k\}$ are the sets recovered by the procedure above, then these are feasible for \emph{(\ref{SC})}. Further, their optimality gap is at most $2\varepsilon$, i.e.~\[  \sum_{k \in [1:K]} \P(\s_k) \ge C(\varepsilon; \sss) - 2\varepsilon.\]
\end{myprop}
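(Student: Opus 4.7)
The plan decomposes into three steps that mirror the two claims in the statement.

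\textbf{Feasibility.} I would first check that the construction $\s_k^\alpha = \mathcal{T}_k^\alpha \setminus \bigcup_{k' < k} \mathcal{T}_{k'}^\alpha$ produces pairwise disjoint sets (immediate from the definition) and that the total error stays below $\varepsilon$. For the error bound, the inclusion $\s_k^\alpha \subseteq \mathcal{T}_k^\alpha$ pushes the one-sided errors into the constraints of (\ref{OSPk}), i.e.\ $\P(\err^k_{\s_k^\alpha}) \le \P(\err^k_{\mathcal{T}_k^\alpha}) \le \alpha_k \varepsilon$; disjointness lets me decompose $\P(\err_{\{\s_k^\alpha\}}) = \sum_k \P(\err^k_{\s_k^\alpha})$, and summing against $\sum_k \alpha_k \le 1$ closes this step.

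\textbf{Comparing to the (SC) optimum.} Next I would fix an optimal tuple $\{\s_k^*\}$ for (\ref{SC}) achieving $C(\varepsilon; \sss)$, and define the induced budget split $\alpha_k^* := \P(\err^k_{\s_k^*})/\varepsilon$. Disjointness of the $\s_k^*$ forces $\sum_k \alpha_k^* = \P(\err_{\{\s_k^*\}})/\varepsilon \le 1$, so $\alpha^*$ lies in the outer search range. Each $\s_k^*$ is feasible for (\ref{OSPk}) at budget $\alpha_k^* \varepsilon$, hence by optimality of $\mathcal{T}_k^{\alpha^*}$ we get $\P(\mathcal{T}_k^{\alpha^*}) \ge \P(\s_k^*)$; summing over $k$ yields $\sum_k \P(\mathcal{T}_k^{\alpha^*}) \ge C(\varepsilon; \sss)$.

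\textbf{Overlap control (the crux).} The remaining job is to bound the mass lost when the $\mathcal{T}_k^{\alpha^*}$ are turned into disjoint $\s_k^{\alpha^*}$. Since $\bigcup_k \s_k^{\alpha^*} = \bigcup_k \mathcal{T}_k^{\alpha^*}$, the loss equals $\sum_k \P(\mathcal{T}_k^{\alpha^*}) - \P(\bigcup_k \mathcal{T}_k^{\alpha^*}) = \E[(N(X)-1)^+]$, where $N(x) = |\{k : x \in \mathcal{T}_k^{\alpha^*}\}|$. The pigeonhole observation I would invoke is that any $x$ with $N(x) \ge 2$ must disagree with its true label on at least $N(x)-1$ of the indices $k$ for which $x \in \mathcal{T}_k^{\alpha^*}$, because $Y$ matches at most one of them. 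Pointwise this gives $\sum_k \indi(X \in \mathcal{T}_k^{\alpha^*},\, Y \neq k) \ge (N(X)-1)^+$; taking expectations and summing the OSP constraints yields $\E[(N-1)^+] \le \sum_k \P(\err^k_{\mathcal{T}_k^{\alpha^*}}) \le \varepsilon$. A cruder pairwise union-bound recovers the $2\varepsilon$ constant quoted. Combining with the previous step, and noting the scheme maximizes over $\alpha$, delivers $\sum_k \P(\s_k) \ge \sum_k \P(\s_k^{\alpha^*}) \ge C(\varepsilon; \sss) - 2\varepsilon$.

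The main obstacle is really only the overlap bound: a priori there is no reason two decoupled OSP solutions should be nearly disjoint, and the claim could fail badly if overlaps were large. The observation that resolves it is that multi-labelings at a single point are structurally errors for all but one label, so the OSP constraints themselves automatically cap the overlap --- no extra machinery is needed beyond the pigeonhole step above.
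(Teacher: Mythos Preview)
Your proof is correct and follows the same core strategy as the paper: both control the overlap of the OSP solutions via the one-sided error constraints, using the observation that a point lying in multiple $\mathcal{T}_k$ must be an error for all but one of them. The paper packages this as a lemma bounding $\sum_k \P(\mathcal{T}_k^\alpha \cap \bigcup_{k' \neq k} \mathcal{T}_{k'}^\alpha) \le 2\varepsilon$ via a decomposition over the values of $Y$ followed by a union bound; your counting-function argument $\E[(N-1)^+] \le \sum_k \P(\err^k_{\mathcal{T}_k}) \le \varepsilon$ is slicker bookkeeping of the same idea and in fact yields the sharper gap $\varepsilon$ (the paper's extra factor of $2$ enters when it relaxes $\bigcup_{k' < k}$ to $\bigcup_{k' \neq k}$, which double-counts pairwise overlaps). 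You also spell out feasibility and the explicit comparison budget $\alpha^*$, both of which the paper treats more tersely by appealing to the ``$\mathcal{T}_k^\alpha$ optimise a relaxation of (\ref{SC})'' observation.
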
 
% \noindent \textbf{Multiclass relaxation} The same relaxation (i.e., dropping the disjointness constraints) can also be applied to the multiclass setting, leading to analogous OSP problems for each class with values $L_k(\varepsilon_k, \sss)$. The overlaps remain of small size, and the analogue of Prop.~\ref{prop:OSP_not_lossy} has the same optimality gap.
\subsection{Equivalence of SC formulations}\label{sec:post_form_comparison}

We show that the prior gating and confidence frameworks are equivalent to ours, based on transforming feasible solutions of one framework into an other.

\emph{Gating}: Denote the acceptance set of gating as $\Gamma = \{ \gamma = 1\},$ and let the predictions be $\Pi_k = \{ \pi = k\}$. Taking $\s_k = \Pi_k \cap \Gamma$ yields disjoint sets that can serve for SC under our formulation that have the same decision regions for each class, and the same rejection region, since $(\bigcup \s_k)^c = \Gamma^c$. Conversely, for disjoint decision sets $\s_k,$ the gate $\Gamma = \bigcup \s_k,$ and the predictor $\Pi_k = \s_k$ form the corresponding gating solution.

\emph{Confidence set}: Take confidence sets $\{\cc_k\}$ which cover $\mathcal{X},$ and have the rejection set $\mathcal{B} = \bigcup_{k \neq k'} \cc_k \cap \cc_{k'}$. Then we produce the disjoint sets $\s_k = \cc_k\setminus (\bigcup_{k' \neq k} \cc_{k'}),$ which retain the same decision regions. These also have the same rejection region because we may express $\s_k = \cc_k \cap \mathcal{B}^c$, and thus $\bigcap \s_k^c = (\bigcap_k \cc_k^c) \cup \mathcal{B}$, and $\bigcap \cc_k^c = \varnothing$ since the $\cc_k$ cover the space. Conversely, for disjoint $\{\s_k\}$, the sets $\cc_k = (\bigcup_{k' \neq k}\s_{k'})^c = \s_k \cup \rr$ cover the space, and have the rejection region $\rr$ since $\cc_k \cap \cc_{k'} = \rr$ for any pair $k\neq k'$. 

Figure \ref{fig:comparison_of_formulations} illustrates these equivalences. Notice that due to the simplicity of the reductions, these equivalences are fine-grained in that the joint complexity of the family of sets used is preserved in going from one to the other. 
\begin{figure}[b]
    \label{fig:comparison_of_formulations}
    \centering 
    \includegraphics[width = 0.32\textwidth]{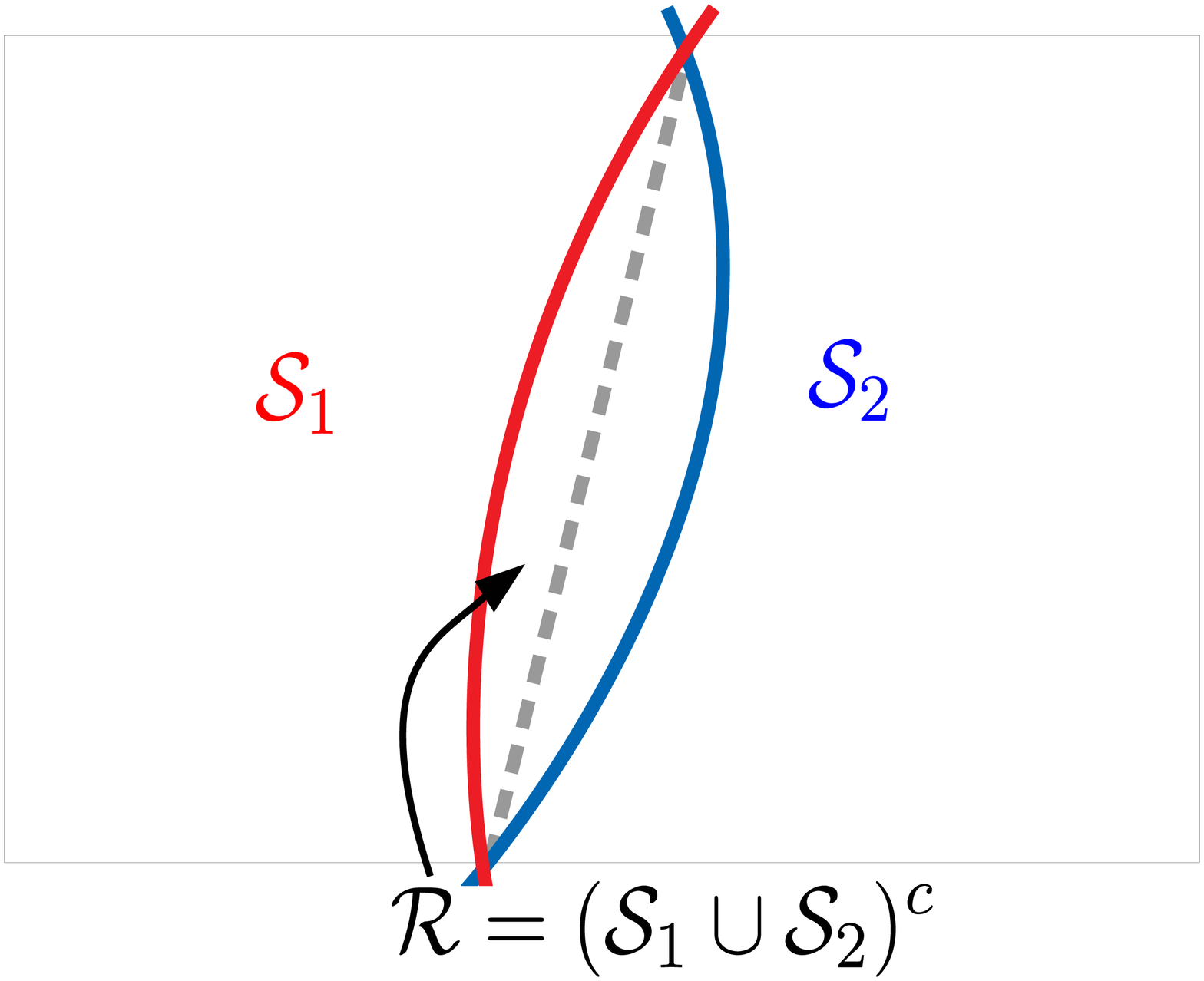}~
    \includegraphics[width=0.32\textwidth]{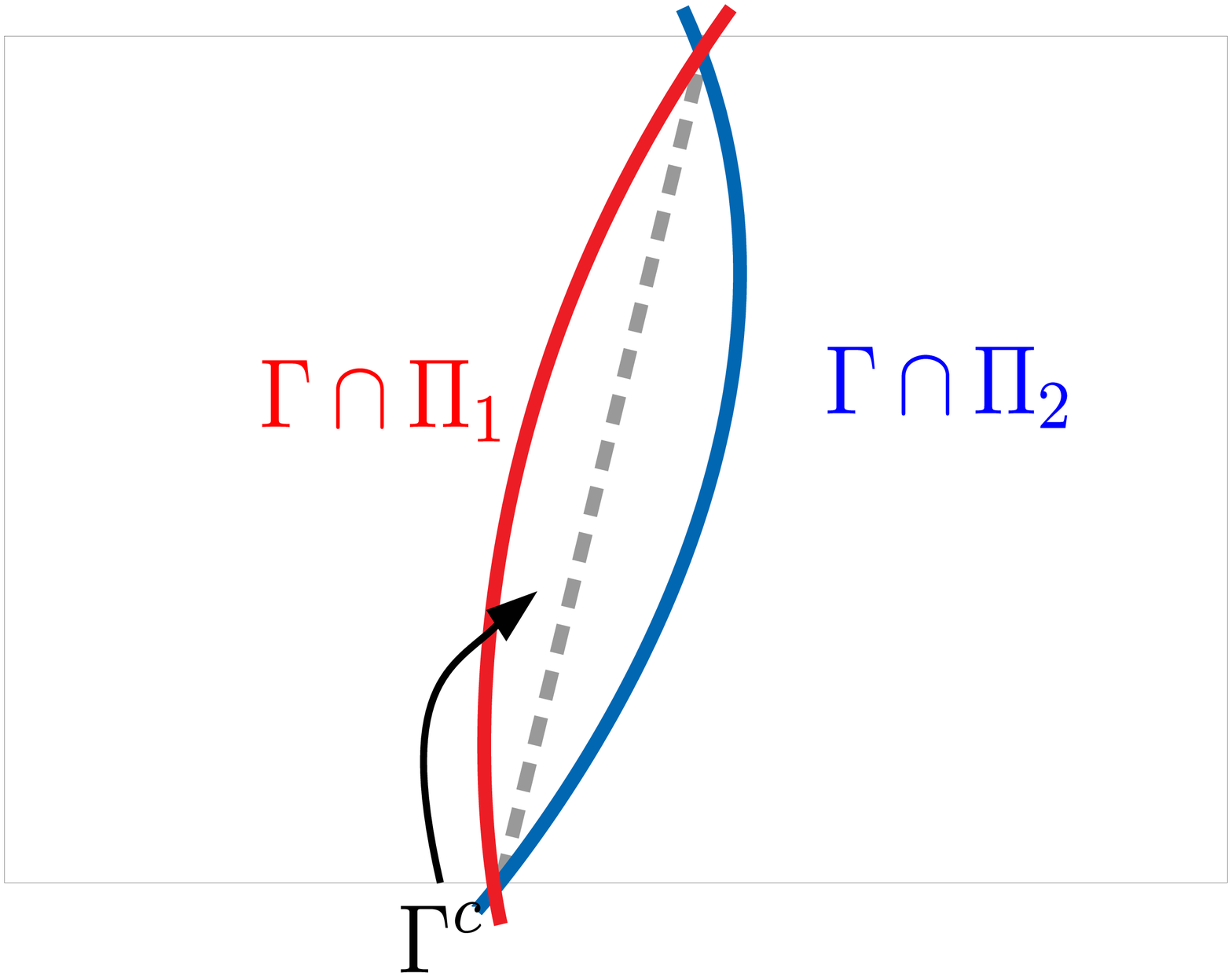}~    ~
    \includegraphics[width = 0.32\textwidth]{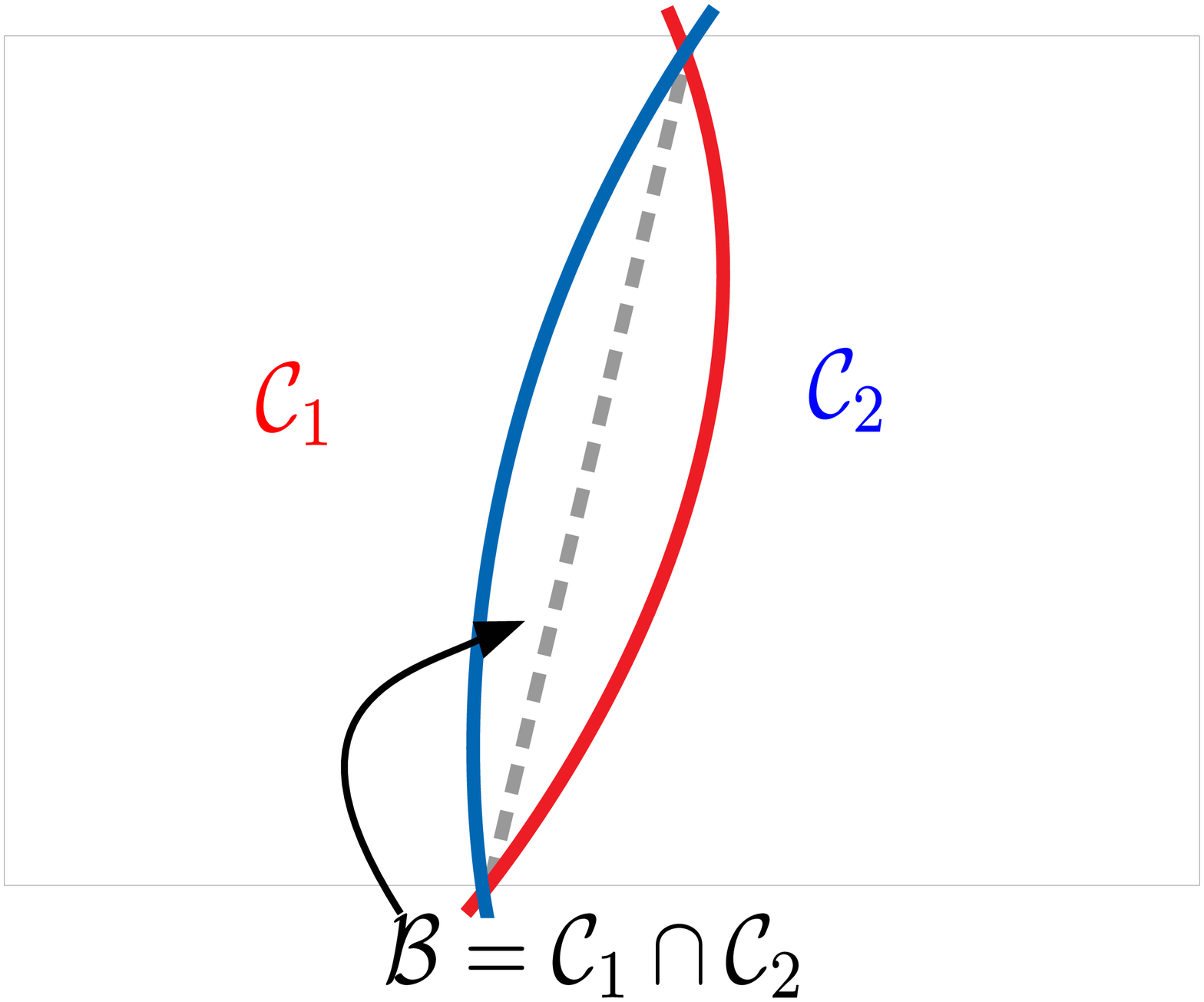} 
    \caption{ An illustration of the equivalence between the three formulations for binary classification. \textit{Left:} our formulation; $\s_i$ denotes disjoint sets; \textit{Middle:} gating with $\Gamma$ representing gated set; \textit{Right:} ${\cal C}_i$ represents confidence sets, and their intersection representing the rejected set. In each case, the coloured curves represent the boundary of the set labelled with the corresponding colour, and the dashed line is the Bayes boundary.}
\end{figure}
Given these equivalences, we again distinguish our approach from the existing ones. 

First, the structure of solutions is markedly different. The gating formulation takes both the rejection and the classification decisions explicitly via the two different sets. The confidence set formulation takes neither explicitly, and instead produces a `list decoding' type solution. In contrast, we make the classification decisions explicit, and produce the rejection decision implicitly.

Consequently, the salient differences lies in the method. The gating based methods have concentrated on the design of surrogate losses and models, while for the confidence set, methods either go through estimating the regression function, or via a reduction to anomaly detection type problems \cite{LeiWasserman, denis_hebiri_multi}\footnote{We refer to \S\ref{appx:our_method_is_diff} for a deeper discussion of this point}. In strong contrast, we develop a new relaxation that allows decoupled learning via `one-sided prediction' problems. These OSP problems are almost opposite to anomaly detection - instead of finding small sets for each class that do not leave too much of its mass missing, we instead learn large sets that do not admit too much of the complementary class' mass.

\subsection{Finite Sample Properties of OSP} \label{sec.theory}

Thus far we have spoken of the full information setting. This section gives basic generalisation analyses for an empirical risk minimisation (ERM) based finite sample approach. Since the one-sided problems are entirely symmetric, we concentrate only on OSP-1, that is (\ref{OSPk}) with $k = 1,$ below. Note that the SC problem can directly be analysed in a similar way (Appx.~\ref{appx:SC_gen}), but we focus on the OSP problem, since this underlies the method we pursue.

We show asymptotic feasibility of solutions, that is, we show that we can, with high probability, recover a set $\s$ for OSP such that $\P(\s) \ge L_1(\varepsilon) - o(1)$ and $\P(\err^1_\s) \le \varepsilon + o(1)$, where the $o$ are as the sample size diverges. This is in contrast to exact feasibility, i.e., insisting on $\s'$ such that $\P(\err^1_{\s'}) \le \varepsilon$ with high probability. Exactly satisfying constraints via ERM whilst maintaining that the objective is also approaching the optimum is a subtle problem, and has been shown to be impossible in certain cases \cite{rigollet2011neyman}. On the other hand, plug-in methods along with an `identifiability' condition which imposes that the law of $\eta(X)$ is not varying too fast at any point can be employed to give exact constraint satisfaction along with a small excess risk - the technique was developed by Tong \cite{tong_plugin}, and has been used previously in SC contexts \cite[e.g.,][]{shekhar2019binary}. However, since the applicability of plug-in methods to large datasets in high dimensions is limited, we do not pursue this avenue here.%\textcolor{red}{clarify, also throw in bit about surrogatified programs?}

\subsubsection*{One-Sided Learnability}
\begin{defi}
We say that a class $\sss$ is one-sided learnable if for every $\varepsilon \ge 0$ and $(\delta, \sigma, \nu) \in (0,1)^3,$ there exists a finite $m(\delta, \sigma, \nu)$ and an algorithm $\mathfrak{A}: (\mathcal{X} \times [1:K])^m \to \sss$ such that for any law $\P,$ given $m$ i.i.d.~samples from $\P$, $\mathfrak{A}$ produces a set $\s_1 \in \sss$ such that with probability at least $1-\delta$ over the data, \[  \P(\s_1) \ge L_1(\varepsilon; \sss) - \sigma, \qquad \textrm{and}\qquad \P(\err^1_{\s_1}) \le \varepsilon + \nu.\]
\end{defi}
The characterisation we offer is 
\begin{myprop}\label{prop:OSP_vc}
 A class $\sss$ is one-sided learnable iff it has finite VC dimension. In particular, given $n$ samples, we can obtain a set $\s_1$ that, with probability at least $1-\delta$, satisfies 
 \begin{alignat*}{3} 
    &\P(\s_1) \ge L_1(&&\varepsilon; \sss)  &&- \sqrt{C_K\frac{ (\textsc{vc}(\sss)\log n + \log(C_K/\delta))}{n}}\\
    &\P(\err^1_{\s_1}) \le &&\varepsilon  && + \sqrt{C_K\frac{ (\textsc{vc}(\sss)\log n + \log(C_K/\delta))}{n}},
\end{alignat*} 
where $C_K$ is a constant that depends only on the number of classes $K$.
\end{myprop}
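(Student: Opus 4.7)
The plan is to prove both directions separately: finite VC implies one-sided learnability via ERM with a slackened constraint, and the converse via reducing realizable binary PAC classification to OSP at $\varepsilon = 0$.

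For the sufficiency direction, let $d := \vc(\sss)$. I would first argue that the ``sliced'' class $\sss_1 := \{\s \times \{2,\ldots,K\} : \s \in \sss\}$, viewed as a family of subsets of $\mathcal{X} \times \mathcal{Y}$, has VC dimension at most $d$: any set it shatters must sit inside $\mathcal{X} \times \{2,\ldots,K\}$, and its $\mathcal{X}$-projection is then shattered by $\sss$. Applying Vapnik--Chervonenkis's inequality to $\sss$ and $\sss_1$ and union bounding yields, with probability at least $1 - \delta$, that simultaneously for every $\s \in \sss$,
\[ |\hat{\P}_n(\s) - \P(\s)| \le \Delta_n \quad \text{and} \quad |\hat{\P}_n(\err^1_\s) - \P(\err^1_\s)| \le \Delta_n, \]
where $\Delta_n = \sqrt{c(d\log n + \log(c/\delta))/n}$ for some universal constant $c$. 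The algorithm I would propose is then the empirical analogue of OSP with a slackened constraint: return $\hat{\s} \in \argmax_{\s \in \sss} \hat{\P}_n(\s)$ subject to $\hat{\P}_n(\err^1_\s) \le \varepsilon + \Delta_n$. On the uniform-convergence event, the population-optimal $\s^\star$ for \ref{OSPk} with $k=1$ is empirically feasible, so $\hat{\P}_n(\hat{\s}) \ge \hat{\P}_n(\s^\star) \ge L_1(\varepsilon;\sss) - \Delta_n$; transferring back to the population via the two uniform bounds yields $\P(\hat{\s}) \ge L_1(\varepsilon;\sss) - 2\Delta_n$ and $\P(\err^1_{\hat{\s}}) \le \varepsilon + 2\Delta_n$, matching the stated rate after absorbing the factor $2$ into $C_K$.

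For necessity I would argue by contrapositive: if $\vc(\sss) = \infty$, I reduce realizable binary PAC classification on $\sss$ to OSP at $\varepsilon = 0$. Given any $\s^\star \in \sss$ and any marginal $\mu$ on $\mathcal{X}$, construct a law $\P$ on $\mathcal{X} \times [1:K]$ by drawing $X \sim \mu$ and setting $Y = 1$ if $X \in \s^\star$, else $Y = 2$. Then $L_1(0; \sss) = \mu(\s^\star)$, and a one-sided learner would, with probability $1-\delta$, return some $\hat{\s}$ with $\mu(\hat{\s}) \ge \mu(\s^\star) - \sigma$ and $\mu(\hat{\s} \setminus \s^\star) = \P(\err^1_{\hat{\s}}) \le \nu$. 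A short calculation gives $\mu(\hat{\s} \triangle \s^\star) \le \sigma + 2\nu$, so $\hat{\s}$ is a $(\sigma + 2\nu)$-accurate classifier for the realizable binary problem with concept class $\sss$. This would make $\sss$ PAC-learnable with finite sample complexity, contradicting the classical necessity of finite VC dimension.

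The only real obstacles are bookkeeping: bounding $\vc(\sss_1)$ so no spurious factor of $K$ leaks into the leading rate, tracking the constants swept into $C_K$, and choosing the constraint slack exactly so that the population optimum remains empirically feasible on the concentration event. The necessity reduction is conceptually clean once one recognizes that maximizing coverage at zero one-sided error is precisely the realizable PAC objective phrased as a set-inclusion problem.
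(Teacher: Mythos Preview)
Your proposal is correct and follows essentially the same approach as the paper. For sufficiency, both you and the paper run an ERM with a slackened constraint and appeal to uniform VC-type deviation bounds to transfer feasibility and objective value between the empirical and population problems; for necessity, both reduce to realizable PAC learning by placing a deterministic label $Y = \mathbf{1}\{X \in \s^\star\}$ and observing that an OSP solution yields a classifier with small symmetric difference to $\s^\star$. The only cosmetic differences are that you fix $\varepsilon = 0$ in the reduction (the paper carries a general $\varepsilon$), and you explicitly bound $\vc(\sss_1)$ via projection whereas the paper simply sums the per-label deviation bounds over $k$ and absorbs the resulting $K$-dependence into $C_K$.
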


The proof of the necessity of finite VC dimension is via a reduction to standard learning, while the upper bounds on rates above follow from uniform convergence due finite VC dimension. See \S\ref{appx:samplecomp_pfs}. The scheme attaining these is a direct ERM that replaces all $\P$s in (\ref{OSPk}) by empirical distributions.
%
%The scheme attaining this is a direct constrained ERM, i.e., the program \begin{align}
%\widehat{L}_+(\varepsilon; \sss) = \max_{\substack{\s_1 \in \sss} } \widehat{\mathbb{P}}(\s_1) \quad\textrm{s.t.} \quad \widehat{\mathbb{P}}(X \in \s_1, Y = 2) \leq \varepsilon,     \tag{Emp-OSP+}     \label{OSP+emp}
%\end{align}%
%
%where $\widehat{\mathbb{P}}$ denotes the empirical distribution induced by the examples. 

On the whole, applying the above result for each of the $K$ OSP problems tells us that if we can solve the empirical OSP problems for the indicator losses and constraints, then we can recover a SC scheme that, with high probability, incurs error of at most $\varepsilon + O(1/\sqrt{n})$ and has coverage of at least $C(\varepsilon;\sss) - 2\varepsilon - O(1/\sqrt{n}).$

% \begin{myprop}
% A class $\sss$ is learnable with abstention if and only if it has finite VC dimension. In particular, the sample complexities satisfy \begin{align*}
%     n(\delta, \zeta, \sigma, \nu) &\le K\textsc{vc}(\sss) \log(K/\delta) \left( \frac{1}{\nu} + \frac{1}{\sigma^2} \right) \\
%     m(\delta, \zeta, \sigma, \nu) &\le K \frac{\textsc{vc}(\sss) \log(K/\delta)}{\zeta^2}, 
% \end{align*}
% where $K \le 40$ is a constant not depending on any parameters. 
% \end{myprop}

\section{Method}\label{sec:prac_meta}
In this section, we derive an efficient scheme, first by replacing indicator losses with two differentiable surrogate variants, and then propose OSP relaxations. A summary of the method expressed as pseudo-code is included in Appx.~\ref{appx:algorithmics}. Throughout, $\sss$ is set to be level sets of the soft output of a deep neural network (DNN), i.e., $\sss = \{f(\cdot;\theta) > t\}$, where $f(\cdot;\theta): \mathcal{X} \to [0,1]$ is a DNN parametrised by $\theta$. The bulk of the exposition concerns learning $\theta$s. In this and the following section, $\{(x_i, y_i)\}_{i = 1}^n$ refers to a training dataset with $n$ labelled data points.

\noindent \textbf{Relaxed losses.} To solve the OSP problem, we follow the standard approach of replacing indicator losses by differentiable ones. This sets up the relaxed problem \begin{equation*}  \min_{\theta_k} \frac{\sum_{i} \ell(f(x_i; {\theta_k}))}{n} \textrm{ s.t.} \frac{\sum_{i: y_i \neq k} \ell'(f(x_i;{\theta_k}))}{n_{\neq k}} \le \varphi_k \end{equation*} where $\theta_k$ parametrises the DNN, $\varphi_k$ denote relaxed values of the constraints, and $\ell, \ell'$ are surrogate losses that are small for large values of their argument, and $n_{\neq k} = |\{i:y_i \neq k\}|$. In the experiments we use $\ell(z) = -\log(z)$ and $\ell'(z) = -\log(1-z)$, essentially giving a weighted cross entropy loss. We refer to the objective of the above problem as $\widetilde{L}_k(\theta_k)$, and the constraint as $\widetilde{C}_k(\theta_k)$.

\noindent \emph{A more stable loss.} Practically, the loss $\widetilde{L}_k$ suffers from instability due to the fact that the first term sums over all instances. This can seen clearly when $\ell = -\log,$ for which the objective includes the sum $\sum_{i: y_i \neq k} -\log(f(x_i;{\theta_k}))$. Since for negative examples we expect $f(x;{\theta_k})$ to be small, this sum is very sensitive to perturbations in these values, which reduces the quality of the solutions. To ameliorate this, we formulate the following `restricted' loss, where the objective instead sums over only the positively labelled samples \begin{equation}\label{OSP_nn_restrict} 
   \min_{\theta_k} \frac{\sum_{i : y_i = k} \ell(f(x_i;{\theta_k}))}{n_k} \textrm{ s.t. } \widetilde{C}_k(\theta_k) \le \varphi_k.
\end{equation}
Notice that the constraint $\widetilde{C}_k$ is the same as before. We refer to the restricted objective above as $\widetilde{L}_k^{\res}(\theta_k).$ This loss underlies all further methods, and $\S\ref{sec:exp}$.

Note that the above program remains sound w.r.t. the OSP task, since it is a surrogate for the following \[ \max_{\s_k \in \sss} \P(X \in \s_k, Y = k) \textrm{ s.t. } \P(X \in \s_k, Y \neq k) \le \varepsilon.\] Comparing (\ref{OSPk}) and the above, the constraints are the same, and the objectives differ by $\P(\s_k) - \P(\s_k, Y = k) = \P(\s_k, Y \neq k),$ which, due to the constraint, is at most $\varepsilon$. Thus, the programs are equivalent up to a small gap (that is, optimal solutions for the above attain a value for (\ref{OSPk}) that is $\varepsilon$-close to the optimal value for it). For the same reason, we can use the solutions of the above one-sided problem in the scheme of \S\ref{sec:OSP} to yield solutions feasible for (\ref{SC}) that satisfy an analogue of Prop.~\ref{prop:OSP_not_lossy} with an optimality gap of $3\varepsilon$ instead of $2\varepsilon$. %Empirically, we find that neither loss strictly dominates the other (Table \ref{table:main_results_merged}).

\noindent \textbf{Joint Optimisation and normalisation.} A na\"{i}ve approach with the above relaxations in hand is to optimise the $k$ OSP problems separately. However, this leads to an exponential in $K$ rise in complexity in the model selection process, since different values of $(\varphi_1, \dots, \varphi_K)$ need to be selected - if $\Phi$ such values are searched over for each $\varphi_k,$ then this amounts to a prohibitive grid search over $\Phi^K$ values. In addition, due to class-wise heterogeneity, the values of $\varphi_k$s need not be calibrated across programs, and thus simple solutions like pinning all the $\varphi_k$s to the same value are not viable. A final issue is that a na\"{i}ve implementation of this setup results in training $K$ separate DNNs, which leads to a $K$-fold increase in model complexity.

We make two modifications to handle this situation. First, we normalise function outputs by adopting the following architecture: we consider DNNs with $K$ output nodes, each representing one of the $f_k$. The backbone layers of the network are shared across all OSP problems. Further, we take \[ f(x) = (f_1(x),\dots, f_K(x)) = \mathrm{softmax}(\langle w_k, \xi_\theta(x)\rangle),\] where $\xi_\theta$ denotes the backbone's output, and recall that $(\mathrm{softmax}(v))_k = \nicefrac{\exp{v_k}}{\sum \exp{v_k}}$. This normalisation and restricted model handles both the class-wise heterogeneity, and the blowup in model complexity.

For the sake of succinctness, we define $\mathbf{w} = (w_1, w_2, \dots, w_K),$ and $\boldsymbol{\varphi} = (\varphi_1, \dots, \varphi_K)$.

Next, in order to ameliorate the search, we propose jointly optimising the various OSP problems, by enforcing a joint constraint on the sum of the various constraint values via a single value $\varphi$. This mimics the structure of (\ref{SC}), where the constraint limits the sum of the one-sided errors. The relaxation thus amounts to dropping the disjointness constraint, and softening the indicators in (\ref{SC}). The resulting problem is \begin{align}\label{program:OSP_together}
    &\min_{\theta, \mathbf{w}, \boldsymbol{\varphi}} \sum_k \widetilde{L}_k^{\res}(\theta, \mathbf{w})  \\
    &\mathrm{s.t.} \quad \forall k: \widetilde{C}_k(\theta, \mathbf{w}) \le \varphi_k, \quad \sum \varphi_k \le \varphi, \notag
\end{align}
where recall $\widetilde{L}^{\res}, \widetilde{C}_k$ from above, which are functions of $(\theta, \mathbf{w})$ since the backbone $\theta$ is shared, and since all $f_k$ depend on all $w_k$s due to the softmax normalisation.

Finally, we propose optimising (\ref{program:OSP_together}) via stochastic gradient ascent-descent. We note that one tunable parameter - $\mu$ - remains in the problem, corresponding to the sum constraint on the $\varphi_k$s, while $\lambda_k$s are multipliers for the $\widetilde{C}_k$ constraints. We again denote $\boldsymbol{\lambda} = (\lambda_1, \dots, \lambda_K)$. The resulting Lagrangian is \begin{align} &\widetilde{M}^{\res}( \theta, \mathbf{w}, \boldsymbol{\varphi}, \boldsymbol{\lambda}, \mu) \\= &\sum_k  \widetilde{L}_k^{\res}(\theta, \mathbf{w}) + \lambda_k(\widetilde{C}_k(\theta, \mathbf{w}) - \varphi_k) + \mu \varphi_k,\notag \end{align} and we solve the problem 
\begin{align}\label{final_minimax_problem} \min_{ (\theta, \mathbf{w}, \boldsymbol{\varphi})} \max_{ \boldsymbol{\lambda}: \forall k, \lambda_k \ge 0} \widetilde{M}^{\res}(\theta, \mathbf{w}, \boldsymbol{\varphi}, \boldsymbol{\lambda}, \mu), \end{align} treating $\mu$ as the single tunable parameter.

We note that the Lagrangian above bears strong resemblance to a one-versus-all (OVA) multiclass classification objective. The principal difference arises from the fact that the losses are weighted by the $\lambda_k$ terms, and the optimisation trades these off, which are typically not seen in one-versus-all approaches (of course, we also use the resulting functions very differently).

\noindent \textbf{Thresholding and resulting SC solution.} The outputs of the classifiers learned with any given $\mu$ yield soft signals for the various OSP problems. To harden these into a decision, we threshold the outputs of the soft classifier at a common level $t \in [0,1]$. This crucially relies on the earlier normalisation of the soft scores to make them comparable. Finally, to deal with ambiguous regions, we use the soft signals $f_k$, and assign the label to the one with the largest score. Overall, this leads to the SC solution \begin{align}\label{SC_soln_from_params} \s_k(\theta, \mathbf{w},t) = &\{x : f_k(x;\theta, \mathbf{w}) \ge t\} \\ &\cap\,\, \{x: k = \argmax_{k'} f_{k'}(x;\theta, \mathbf{w}) \}. \notag\end{align}
\noindent \textbf{Model Selection.} The above setup has two scalar hyperparameters -  $\mu$ from (\ref{final_minimax_problem}), and threshold $t$ at which hard decisions are produced in (\ref{SC_soln_from_params}), and each choice of these yields a different solution. Our final model is one that performs the best on the validation dataset among all hyperparameter tuples ($\mu,t$). Concretely, let $\widehat{\mathbb{P}}_{V}$ denote the empirical law on a validation dataset. Denote the solutions from (\ref{final_minimax_problem}) with a choice of $\mu$ as $(\theta(\mu), \mathbf{w}(\mu))$. Let $\mathbf{M,T}$ respectively be discrete sets of $\mu$'s and $t$'s. The procedure is \begin{itemize}[nosep]
    \item For each $(\mu, t) \in \mathbf{M} \times \mathbf{T},$ and each $k,$ compute $\s_k(\mu,t) =  \s_k( \theta(\mu), \mathbf{w}(\mu), t)$ as defined in (\ref{SC_soln_from_params}).
    \item For each $(\mu,t) \in \mathbf{M} \times \mathbf{T},$ evaluate $\widehat{C}_V(\mu, t) = \sum_k \widehat{\P}_V(\s_k(\mu,t)) $ and $\widehat{E}_V(\mu,t) = \sum_k \widehat{\P}_V(\err^k_{\s_k}).$
    \item Let $(\mu^*, t^*) = \argmax_{\mathbf{M} \times \mathbf{T}} \widehat{C}_V(\mu,t)$ subject to $\widehat{E}_V(\mu,t) \le \varepsilon$. Return $(\theta(\mu^*), \{w_k(\mu^*)\}, t^*)$.
\end{itemize}

%A concise algorithmic summary of the above discussions is included in Appx.~\ref{appx:algorithmics} (due to space limitations).
\if0
This sets up the following model selection procedure. 

Let $\varepsilon$ be the target error level. We cull a validation set $\mathcal{V}$ from the training data. Further, let the values of $\lambda$ that are scanned over be collated in the set $\Lambda,$ and let $\mathcal{A}$ consist of values of $\alpha \in [0,1]$ that will be examined. These can be any sufficiently fine grids. The procedure is: \begin{itemize}[leftmargin = 0.16in]
    \item Minimise $\widetilde{L}_k(\theta;\lambda), \widetilde{L}^-(\theta;\lambda)$ for every $\lambda \in \Lambda$ to obtain functions $f^+_\lambda$ and $f^-_\lambda$ as solutions.
    \item For each $\alpha \in \mathcal{A}, \lambda \in \Lambda$ take $t_+_\lambda(\alpha), t_-_\lambda(\alpha)$ to be the smallest values such that \[ \widehat{\P}_{\mathcal{V}}( f^+_\lambda > t_+_\lambda(\alpha) , Y = 2) \le \alpha \varepsilon, \quad \textit{and} \quad \widehat{\P}_{\mathcal{V}}( f^-_\lambda > t_-_\lambda(\alpha) , Y = 1) \le \overline\alpha \varepsilon,\] where $\widehat{\P}_\mathcal{V}$ is the empirical distribution induced by the validation set $\mathcal{V}$.
    \item For each $\alpha \in \mathcal{A},$ let $(f^+_\alpha, t_+_\alpha)$ be the pair $(f^+_\lambda, t_+_\lambda(\alpha))$ for the $\lambda \in \Lambda$ that maximises $\widehat{\P}_{\mathcal{V}}(f^+_\lambda > t_+_\lambda(\alpha)).$ Similarly define $(f^-_\alpha, t_-_\alpha)$. Let $\tcal^\alpha_+ := \{f^+_\alpha > t_+_\alpha\}$ and $\tcal^\alpha_- := \{f^-_\alpha > t_-_\alpha\}$.
    \item We're now in the situation of \S\ref{sec:OSP}, and select the best $\alpha \in \mathcal{A}$ as discussed there.
\end{itemize}\fi

%In practice, the value of $\varepsilon$ may be reduced to about $\varepsilon - \sqrt{\log(|\Lambda||\mathcal{A}|)/|\mathcal{V}|}$ in the above in order to obtain sets that satisfy the $\varepsilon$ error constraint whp.

 %\textcolor{red}{The relaxations are easy to set up, I'll write them down tomorrow. It depends on how theoretical we want to be here, but while generalisation analysis for relaxed version is trivial (Lipschtiz surrogates and Rademacher complexity via Talagrand's lemma), the question of caliberation is subtle - basically replacing the constraints with surrogates will result in a considerably more conservative constraint set, and the value of the optimum may then be nowhere close to the actual value (even compared to a convexified objective but with the proper constraints). This seems like a bigger problem than is worth chewing on in this, but might be more interesting to look at later.}

%%% APPENDIX %%%
% If the appendix exceeds the page limit, it must be submitted as a supplementary material.

\section{Experiments}\label{sec:exp}
%As pursued in the rest of the text, we focus on binary classification in the regime of low errors.
\subsection{Experimental Setup and Baselines} \label{sec:setup_baselines}

%\textcolor{red}{Two suggestions - further training of the last layers in the setting of Tables 2 and 3; Second, train the backbone at a slower scale than the last layer, warm starting from the CE solution}

\textbf{Datasets and Model Class} We evaluate all methods on three benchmark vision tasks: CIFAR-10 \cite{CIFAR}, SVHN-10 \cite{SVHN} ($10$ classes), and Cats \& Dogs\footnote{\url{https://www.kaggle.com/c/dogs-vs-cats}} (binary). All models implemented below are DNNs with the RESNET-32 architecture \cite{resnet}, which is a standard model class in vision tasks. 20\% of the training data is reserved for validation in each dataset. All models are implemented in the tensorflow framework. The samples sizes and the best standard classification performance is presented in Table \ref{table:data+stdCl}. 
\begin{table}[h]
\tabcolsep=0.05cm
    \centering
     \begin{tabular}{c|  c c c  c } 
     %\hline
      \multirow{2}{*}{ Dataset}  & \multicolumn{3}{c}{Num. of Samples}{}  & \multirow{2}{*}{Std.~Error} \\ 
      & Train. & Test & Val. &  \\ 
     \hline
     \multirow{2}{*}{CIFAR-10 } & \multirow{2}{*}{45K} & \multirow{2}{*}{10K}  & \multirow{2}{*}{5K} & \multirow{2}{*}{9.58\%} \\ & & & & \\ 
     \multirow{2}{*}{SVHN-10 } & \multirow{2}{*}{65.9K} & \multirow{2}{*}{26K}  & \multirow{2}{*}{7.3K} & \multirow{2}{*}{3.86\%} \\ & & & &\\ 
     \multirow{2}{*}{Cats \& Dogs } & \multirow{2}{*}{18K} & \multirow{2}{*}{5K}  & \multirow{2}{*}{2K} & \multirow{2}{*}{5.72\%} \\ & & & &\\ 
    \end{tabular}
    \caption{Dataset sizes and standard classification error}
    \label{table:data+stdCl}
\end{table}

\textbf{Baselines:} We benchmark against three state of the art methods. The `selective net' and `deep gamblers' methods also require hyperparameter and threshold tuning as in our setup, and we do this in a brute force way on validation data, as in ours.

\emph{Softmax Response Thresholding} (SR) involves training a neural network for standard classification, and then thresholding its soft output to decide to reject. More formally, the decision is to reject if $\{ \mathrm{softmax}(f_1,\dots, f_K) < t\},$ where $f$ is the soft output, and $t$ is tuned on validation data. This simple scheme is known to have near-SOTA performance \cite{geifman2017selective, geifman2019selectivenet}.

\emph{Selective Net} (SN) is a DNN meta-architecture for SC \cite{geifman2019selectivenet}. The network provides three soft outputs - $(f,\gamma, \pi)$, where $f$ is an auxiliary classifier used to aid featurisation during training, and $\gamma, \pi$ is a gate-predictor pair. Selective net prescribes a loss function that trades off coverage and error via a multiplier $c$, and by fine-tuning a threshold on $\gamma$ to reject. We use the publicly available code\footnote{\url{https://github.com/geifmany/selectivenet}} to implement this, and a comprehensive sweep over the coverage and threshold hyper-parameters. We use $40$ valued grid for the parameter $c$ (with 10 equally spaced values in the range $[0.0, 0.65)$ and remaining $30$ values in the range $[0.65, 1.0]$). For the gating threshold $\gamma$, we use $100$ thresholds equally spaced in the range $[0,1]$, the same as for our scheme. 
    
% https://github.com/Z-T-WANG/NIPS2019DeepGamblers/
\emph{Deep Gamblers} (DG) is a method based on a novel loss function for SC  within the gating framework \cite{deep_gamblers}. The NNs have $K+1$ outputs - $f_1,\dots, f_K, f_?$. The cross-entropy loss is modified to $\sum \log\left(( f_{y_i}(x_i) + \smallo^{-1} f_?(x_i)\right),$ where $\smallo \in [1,K)$ is a hyperparameter that trades-off coverage and accuracy. Hard decisions are obtained by tuning the threshold of $f_?$ on a validation set. We adapt the public torch code\footnote{\url{https://github.com/Z-T-WANG/NIPS2019DeepGamblers/}} for this method to the Tensorflow framework. We used $40$ values of $\smallo$ spaced equally in the range {$[1,2)$}\footnote{We initially made a mistake and scanned $\smallo$ in $[1,2)$ instead of $[1,10)$. We then redid the experiment. with 40 values in $[1,10)$, and found that performance deteriorated. This is because the optimal $\smallo$ for these datasets lies in $[1,2),$ and the wider grid leads to a less refined search in this domain. Thus, values from the original experiment are reported. See Tables \ref{table:fixed_multi_class_results_new_table}, \ref{table:fixed_multi_class_results_target_cov_new_table} in \S\ref{appx:expts} for the values with a scan over $[1,10)$.}, and 100 values of thresholds in $[0,1]$. 

\subsection{Training One-Sided Classifiers}

\textbf{Loss Function} We use the loss function $\widetilde{M}^{\res}$ developed in \S\ref{sec:prac_meta}. In particular for $\widetilde{L}^{\res}_k,$ we use $\ell(z) = -\log(z),$ and for $\widetilde{C}_k, \ell'(z) = -\log(1-z).$

\textbf{Training of Backbones} As previously discussed, our models share a common backbone and have a separate output node for each OSC problem. We intialise this backbone with a base network trained using the cross-entropy loss (i.e.~a `warm start'). Note that this typically yields a strong featurisation for the data, and exploiting this structure requires us to not move too far away from the same. At the same time, due to the changed objective, it is necessary to at least adapt the final layer significantly. We attain this via a two-timescale procedure: the loss is set to the OSP Lagrangian, and the backbone is trained at a \emph{slower rate} than the last layer. Concretely, the last layer is updated at every epoch, while the backbone is updated every 20 epochs. This stabilises the backbone, while still adapting it to the particular OSP problem that the network is now trying to solve.

\textbf{Hyper-parameters}. All of the methods were trained using the train split and the model selection was performed on the validation set. The results are reported on the separate test data (which is standard for all three of the models considered). The minimax program on the Lagrangian was optimised using a two-timescale stochastic gradient descent-ascent, following the recent literature on nonconvex-concave minimax problems \cite{lin2019gradient}. In particular, we used Adam optimizer for training with initial learning rates of $(10^{-3}, 10^{-5})$ for the min and the max problems respectively for CIFAR-10 and SVHN-10, and of $(10^{-3}, 10^{-4})$ for Cats \& Dogs.\footnote{These rates were selected as follows: the standard classifier was trained with the rate $10^{-4}$, which is a typical value in vision tasks. We then picked one value of $\mu$, and trained models using rates in $(10^{-k}, 10^{-j})$ for $(j,k) \in [2:6] \times [2:6]$, tuned thresholds for models at $0.5\%$ target accuracy using validation data, and chose the pair that yielded the best validation coverage. Performance tended to be similar as long as $j \neq k$, and curiously, we found it slightly better to use a smaller rate for the max problem, which goes against the suggestions of Lin et al. \cite{lin2019gradient}.} These initial rates were reduced by a factor of $10$ after $50$ epochs, and training was run for $200$ epochs. The batch size was set to $128.$ 

We searched over 30 values of $\mu$ for each of our experiments - $10$ values equally spaced in $[0.01,1],$ and remaining 20 equally spaced in $[1,16]$. We further used 100 values of thresholds equally spaced in $[0,1]$. 

\subsection{Results} The key takeaway of our empirical results is the significant increase in performance of our SC scheme when compared to the baselines. We also include some observations about the structure of the solutions obtained.

\subsubsection{Performance}

Note - the Tables cited below all appear on page 13.

\textbf{Performance at Low Target Error} is presented in Table \ref{table:multi_class_results}, which reports coverage at three (small) targeted values of error - $\nicefrac{1}{2},1,$ and $2$ percent - that are in line with the low target error regime that is the main focus of the paper. Notice that these target error values are far below the best error obtained for standard classification (Table \ref{table:data+stdCl}). We observe that the performance of our SC methods is significantly higher than the SOTA methods, especially in the case of CIFAR-10 and Cats \& Dogs, where we gain over $4\%$ in coverage at the $0.5\%$ design error. The effect is weaker in SVHN, which we suspect is due to saturation of performance in this simpler dataset. 

\textbf{Performance at High Target Coverage} is presented in Table \ref{table:multi_class_results_target_cov}. This refers to the coverage constrained SC formulation discussed in \S\ref{sec:alternate_goals}. For these experiments, we use the same $\mu$ values (to avoid retraining), but choose thresholds such that the coverage of the resulting model exceeds the stated target, and the models with the lowest error at this threshold are chosen. We observe that at target coverage $100\%,$ the SR solution outperfoms all others. This is expected, since 100\% coverage corresponds to standard classification, and the SR objective is tuned to this, while the others are not. Surprisingly, for coverage below 100\%, our OSP-based relaxations deliver stronger performance than the benchmarks. Note that this is not due to the low target error performance, because (besides SVHN), the errors attained at these coverage are signficantly above the low target errors investigated in Table \ref{table:multi_class_results}. This shows that our formulation is also effective in the high-coverage regime. 

\textbf{Coverage-Error Curves} for the CIFAR-10 dataset are shown in Fig.~\ref{fig:cov-err_curve}. These curves plot the best coverage obtained by training at a given target error level using each of the methods discussed.\footnote{In particular, we train models at target errors $\varepsilon_i = (i/2)\%$ for $i \in [1:20]$. We then obtain the achieved test error rates $\widehat{\varepsilon}_i$ and coverages $c_i$ for these models. The curves linearly interpolate between $(\widehat{\varepsilon}_i, c_i)$ and $(\widehat{\varepsilon}_{i+1}, c_{i+1})$.} We find that the coverage obtained by our method uniformly outperform DG and SN, and also outperform SR for the bulk of target errors, except those very close to the best standard error attainable.  This illustrates that our scheme is effective across target error levels. We find this rather surprising since we designed our method with explicit focus on the low target error regime. Tables \ref{table:multi_class_results} and \ref{table:multi_class_results_target_cov} can be seen as detailed looks at the left (error $< 2$) and the upper (coverage $>90$) ends of these curves. 

\textbf{Observations regarding baselines}. Tables \ref{table:multi_class_results},\ref{table:multi_class_results_target_cov}, and Figure \ref{fig:cov-err_curve} all show that across regimes, DG and SN perform similarly to SR, and are frequently beaten by it. This observation is essentially consistent with the results presented in previous work \cite{geifman2019selectivenet, deep_gamblers}, and supports our earlier claims that the prior SOTA methods for selective classification do not meaningfully improve on na\"{i}ve methods. To alleviate concerns about implementation, we emphasise that we performed a comprehensive hyperparameter search for both SN and DG, and the only change is to use RESNETs instead of VGG.

\begin{figure}[t]
\begin{center}%\vspace{-10pt}
   \includegraphics[width=.6\linewidth]{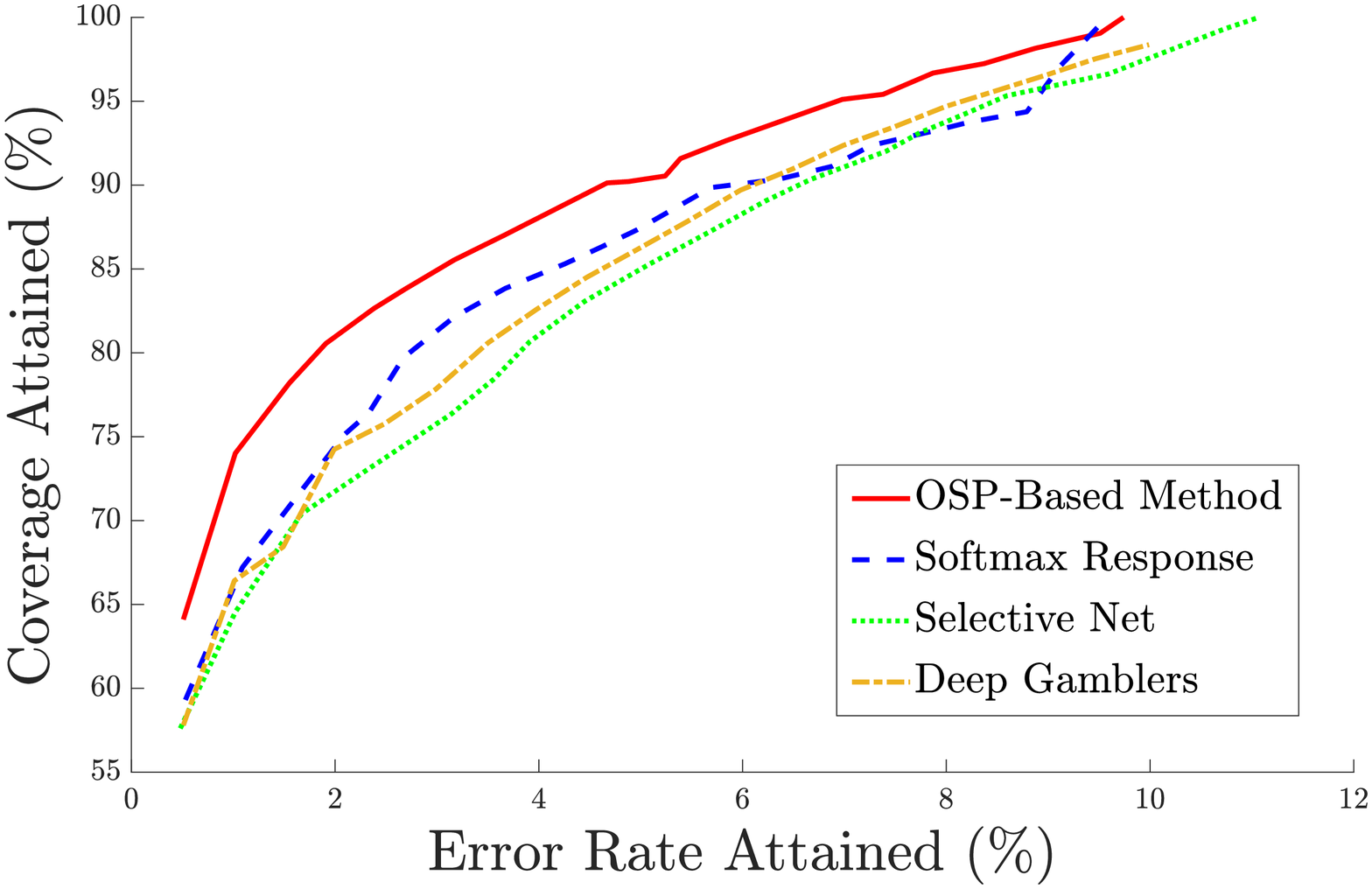}
\end{center}%\vspace{-10pt}
   \caption{Coverage vs Error Curves for the CIFAR-10 dataset. Higher values of coverage are better. Notice the curious behaviour of SR in that the curve's slope sharply changes close to the best standard error rate.}
\label{fig:cov-err_curve}
\end{figure}

\subsubsection{Structure of the Solutions}

\textbf{Overlap of OSP solutions is small}. Table \ref{tab:percentage-overlap-between-two-one-sided-learners} shows the probability mass of the ambiguous regions for our raw OSP solutions (i.e., the raw sets $\{x: f_k > t\}$ without the max-assignment $\s_k =\{x: f_k > t\} \cap \{ x: k = \argmax f_k\}$) for the models of Table \ref{table:multi_class_results}. We find that this overlap is very small - much smaller than the $2\varepsilon$ bound in Prop.~\ref{prop:OSP_not_lossy}. Empirically, these sets are essentially disjoint, and so the training process is close to tight for the SC problem. We believe that this effect is mainly due to the simple tuning enabled by the softmax normalisation of OSP problem outputs described in \S\ref{sec:prac_meta}.

\textbf{Consistency of rejection regions} We say that a sequence of models trained at error levels $\varepsilon_i$ have consistent rejection regions if for every $\varepsilon_i < \varepsilon_j$, if $\rr_i, \rr_j$ are the rejection regions for models trained at these errors, then $\P(\rr_i \cap \rr_j^c)$ is very small. This means that points that are rejected when designing at a higher error level continue to be rejected for stricter error control. Such consistency may be useful for building cascades of models, or for using the error level at which a point is rejected as a measure of uncertainty.

We found that the models obtained by our procedure are remarkably consistent in the high-accuracy regime. Concretely, for $\varepsilon_i  = (i/2)\%$ for $\in [1:5],$ for both CIFAR-10 and Cats \& Dogs test sets, the models were entirely consistent, i.e. $\rr_j \subset \rr_i$ for $j > i$\footnote{Due to time constraints we only checked for higher values of $i$ in the CIFAR-10 case, in which the trend continued until $i = 20$, that is, until full coverage. A curious observation in this case was that in all $20$ models for the CIFAR-10 dataset, the same value of $\mu$ was best, and the models differed only in the thresholds (this did not occur for SVHN and Cats v/s Dogs). While this obviously implies consistency of the rejection regions, it is unexpected, and suggests that there may be room to improve in our training methodology.}, while for SVHN, the only violation was that $|\rr_{2.5\%} \cap \rr_{2\%}^c| = 2$. Since the test dataset for SVHN has size $>7000,$ this is a tiny empirical probability of inconsistency of $< 0.03\%$. 

\section{Conclusion}

We have proposed a new formulation for selective classification, which leads to a novel one-sided prediction method. The formulation is naturally motivated, and is equivalent to other formulations appearing in the literature. It is also amenable to standard statistical analyses. The resulting method is flexible, efficiently trainable via standard techniques, and further outperforms state of the art methods across target error regimes. Further, it is the first method to non-trivially outperform na\"{i}ve post-hoc solutions, and thus, in our opinion, represents a significant step in the practical approaches to selective classification.

\begin{table*}[ht]
\tabcolsep=0.2cm %\vspace{-10pt}
    \centering {%\resizebox{2\columnwidth}{!}{%
     \begin{tabular}{ c | c |  c c  c c  c c  c c } 
     \multirow{2}{*}{Dataset} & {Target} & \multicolumn{2}{c}{OSP-based}{}   & \multicolumn{2}{c}{SR} & \multicolumn{2}{c}{SN}  & \multicolumn{2}{c}{DG} \\ 
      & Error & Cov. & Error & Cov. & Error  & Cov. & Error & Cov. & Error \\ 
     \hline\hline
     %\multirow{3}{*}{CIFAR-10}     & 10\% & 100 & 9.74  & 98.52 & 10.58 & 100 & 11.07 & 97.65 & 9.97 \\
    \multirow{3}{*}{CIFAR-10}       & 2\%  & \textbf{80.6} & 1.91  & 75.1 & 2.09 & 73.0 & 2.31 & 74.2 & 1.98 \\
                    & 1\%  & \textbf{74.0} & 1.02  & 67.2 & 1.09 & 64.5 & 1.02  & 66.4 & 1.01\\
                    & 0.5\%  & \textbf{64.1} & 0.51  &  59.3 & 0.53 & 57.6 & 0.48 & 57.8 & 0.51 \\
     \hline
    % \multirow{3}{*}{SVHN-10}     & 10\%  & 100 & 4.15 & 99.97 & 3.86 & 100  & 4.59 & 100 & 4.38 \\
    \multirow{3}{*}{SVHN-10}      & 2\%  & \textbf{95.8} & 1.99 & 95.7 & 2.06 & 93.5  & 2.03 & 94.8 & 1.99\\
                    & 1\%  & \textbf{90.1} & 1.03  & 88.4 & 0.99 & 86.5 & 1.04 & 89.5 & 1.01\\
                    & 0.5\%  & \textbf{82.4} & 0.51  & 77.3 & 0.51 & 79.2 & 0.51 & 81.6 & 0.49 \\
    \hline
     %\multirow{3}{*}{Cats \& Dogs}  & 2\%  & \textbf{91.5} & 1.96  & 88.2 & 2.03 & 84.3 & 1.94 & 87.4 & 1.94 \\ 
     %              & 1\%  & \textbf{85.1} & 0.98  & 80.2 & 0.97 & 78.0 & 0.98 &  81.7 & 0.98\\ 
     %              & 0.5\% & \textbf{79.7} & 0.49  & 73.2 & 0.49 & 70.5 & 0.46 & 74.5 &  0.48\\
     \multirow{3}{*}{Cats \& Dogs}  & 2\%  & \textbf{90.5} & 1.98  & 88.2 & 2.03 & 84.3 & 1.94 & 87.4 & 1.94 \\ 
                   & 1\%  & \textbf{85.4} & 0.98  & 80.2 & 0.97 & 78.0 & 0.98 &  81.7 & 0.98\\ 
                   & 0.5\% & \textbf{78.7} & 0.49  & 73.2 & 0.49 & 70.5 & 0.46 & 74.5 &  0.48\\
    \end{tabular}}\vspace{-7pt}
    \caption{Performance at Low Target Error. The OSP-based scheme is our proposal. SR, SN, DG correspond to softmax-response, selective net, deep gamblers. Errors are rounded to two decimals, and coverage to one.}\vspace{-5pt}
    \label{table:multi_class_results}
\end{table*}

\begin{table*}[ht]
\tabcolsep=0.2cm
    \centering {%\resizebox{2\columnwidth}{!}{%
     \begin{tabular}{ c | c |  c c  c c  c c  c c } 
     \multirow{2}{*}{Dataset} & {Target} & \multicolumn{2}{c}{OSP-based}{}   & \multicolumn{2}{c}{SR} & \multicolumn{2}{c}{SN}  & \multicolumn{2}{c}{DG} \\ 
      & Coverage & Cov. & Error & Cov. & Error  & Cov. & Error & Cov. & Error \\ 
     \hline\hline
     \multirow{3}{*}{CIFAR-10}     
        & 100\%  & 100 & { 9.74 } & 99.99 & \textbf{9.58} & 100 & 11.07 & 100 & 10.81 \\
%        & 97.5\% & 97.53 & 8.29 & 97.49 & 9.96 & 97.65 & 9.96 & 97.54 & 9.47 \\
%        & 97\%   & 97.11 & 7.91 & 96.9 & 9.65 & 96.75 & 9.51 & 97.03 & 9.19 \\
        & 95\%   & 95.1 & \textbf{6.98} & 95.2 & 8.74 & 94.7 & 8.34 & 95.1 & 8.21 \\
        & 90\%   & 90.0 & \textbf{4.67} & 90.5 & 6.52 & 89.6 & 6.45 & 90.1 & 6.14 \\
     \hline
     \multirow{3}{*}{SVHN-10}     
        & 100\% & 100 & 4.27 & 99.97 & \textbf{3.86} & 100 & 4.27 & 100 & 4.03  \\
%        & 97.5\% & 97.61 & 3.02 & 97.53 & 2.68 & 97.67 & 3.31 & 97.51 & 2.85 \\
%        & 97\% & 97.03 & 2.61 & 97.08 & 2.56 & 97.13 & 3.14 & 97.02 & 2.64 \\
        & 95\% & 95.1 & \textbf{1.83} & 95.1 & 1.86 & 95.1 & 2.53 & 95.0 & 2.05 \\
        & 90\% & 90.1 & \textbf{1.01} & 90.0 & 1.04 & 90.1 & 1.31 & 90.0 & 1.06 \\
     \hline
     \multirow{3}{*}{Cats \& Dogs}     
        & 100\% & 100 & 5.93 & 100 & \textbf{5.72} & 100 & 7.36 & 100 & 6.16  \\
%        & 97.5\% &  &  & 97.52 & 4.38 & 97.96 & 6.3 & 97.72 & 5.36 \\
%        & 97\% &  &  & 97.01 & 4.24 & 97.38 & 6.14 & 97.26 & 5.2 \\
        & 95\% & 95.1 & \textbf{ 2.97 } & 95.0 & 3.46 & 95.2  & 5.1 & 95.1 & 4.28  \\
        & 90\% & 90.0 & \textbf{ 1.74 } & 90.0 & 2.28 & 90.2 & 3.3 & 90.0 & 2.50  \\
    \end{tabular}}\vspace{-6pt}
    \caption{Performance at High Target Coverage. Same notation as Table \ref{table:multi_class_results}.} \vspace{-5pt}
    \label{table:multi_class_results_target_cov}
\end{table*} 

\begin{table}[ht]\tabcolsep=0.2cm
    \centering 
     \begin{tabular}{ c | c | c } 
     Dataset & Target Error &  Overlap  \\ 
     \hline\hline
     \multirow{3}{*}{CIFAR-10} & 2\% & 0.09\% \\ 
                      & 1\% & 0.01\% \\ 
                      & 0.5\% & 0.00\% \\ 
     \hline
     \multirow{3}{*}{SVHN-10} & 2\% &  0.05\%\\ 
                      & 1\% & 0.01\% \\ 
                      & 0.5\% & 0.00\% \\ 
     \hline
     \multirow{3}{*}{Cats \& Dogs}  & 2\% &  0.07\% \\ 
                      & 1\% & 0.01\% \\ 
                      & 0.5\% &  0.00\%
    \end{tabular}\vspace{-6pt}
    \caption{Size of overlap between OSP sets in Table \ref{table:multi_class_results} }%\vspace{-5pt}
    \label{tab:percentage-overlap-between-two-one-sided-learners}
\end{table}

% \begin{figure}[hbt]
% \begin{center}\vspace{-10pt}
%   \includegraphics[width=.99\linewidth]{Cov-Err.eps}
% \end{center}\vspace{-10pt}
%   \caption{Coverage vs Error Curves for the CIFAR-10 dataset. Higher values of coverage are better. Notice the curious behaviour of SR in that the curve's slope sharply changes close to the best standard error rate.}
% \label{fig:cov-err_curve}\vspace{-10pt}
% \end{figure}

% \section{Conclusion}

% We have presented a novel method for selective classification that was motivated in principled manner by relaxing a new formulation of the problem. The formulation is natural, equivalent to prior proposals, and is amenable to standard statistical analyses. The OSP-based relaxation is theoretically efficient in the low target error regime, and the resulting method is efficiently trainable via standard techniques, and outperforms SOTA methods across target error levels. Further, it is the first method to non-trivially outperform na\"{i}ve post-hoc solutions, and thus represents a significant step in the practical approaches to selective classification.
\newpage

\subsection*{Acknowledgements}

This research was supported by National Science Foundation grants CCF-2007350 (VS), CCF-2022446(VS), CCF-1955981 (VS), the Data Science Faculty Fellowship from the Rafik B. Hariri Institute, the Office of Naval Research Grant N0014-18-1-2257 and by a gift from the ARM corporation.

%durmus alp emre acar

We would like to thank Durmus Alp Emre Acar for helpful discussions regarding implementation of the methods.

\printbibliography

\clearpage

\newpage
\appendix
\thispagestyle{empty}

\begin{center}
    {\Large \textbf{\underline{Appendices to Selective Classification via One-Sided Prediction}} }
\end{center}

\section{Appendix to \S\ref{sec:form} }

\subsection{Proof of Proposition \ref{prop:OSP_not_lossy}}\label{appx:prop1pf}

\begin{proof}

We recall the notation. $\alpha \in [0,1]^K$ is such that $\sum \alpha_k \le 1.$ The $\tcal^{\alpha}_k$ are the optimising solutions to the OSP problems at error $\alpha_k \varepsilon,$ i.e.  \[ \tcal_k^{\alpha} \in  \argmax\, \P(\tcal) \textrm{ s.t. } \P(X \in \tcal, Y \neq k) \le \alpha_k \varepsilon,\] while the $\s_k^{\alpha}$ are produced by removing the smaller overlap with smaller labels in $\tcal_k^{\alpha},$ i.e. \[ \s_k^{\alpha} = \tcal_k^{\alpha} \setminus \bigcup_{k' < k} \tcal_{k'}^\alpha.\]

We first argue that the total overlap of the $\tcal$s is small. \begin{mylem}\label{lem:overlap_is_small}
Let $\tcal_k^{\alpha}$ be generated as above. Then \[ \sum_k \P( \bigcup_{ k'\neq k} \tcal_k^{\alpha} \cap \tcal_{k'}^{\alpha} ) \le 2\varepsilon. \] Since the total overlap is $\bigcup_{k} \left(\tcal_k^{\alpha} \cap \bigcup_{k' \neq k} \tcal_{k'}^{\alpha}\right)$, this also controls the probability of the total overlap, that is, \[ \P( \bigcup_{k, k'\neq k} \tcal_k^\alpha \cap \tcal_{k'}^{\alpha}) \le 2\varepsilon.\]
\end{mylem}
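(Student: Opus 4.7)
The plan is to reduce the overlap bound to the one-sided error constraints of the OSP problems via a pointwise counting argument. Define the multiplicity $N(x) := \sum_k \indi[x \in \tcal_k^{\alpha}]$, and note that $\{X \in \tcal_k^{\alpha}\} \cap \{X \in \bigcup_{k' \neq k} \tcal_{k'}^{\alpha}\} = \{X \in \tcal_k^{\alpha},\ N(X) \ge 2\}$. Summing the corresponding indicators over $k$ yields
\[
\sum_k \indi\!\left[X \in \tcal_k^{\alpha} \cap \bigcup_{k' \neq k} \tcal_{k'}^{\alpha}\right] \;=\; N(X)\,\indi[N(X) \ge 2],
\]
so taking expectations the left-hand side of the lemma equals $\E\bigl[N(X)\,\indi[N(X) \ge 2]\bigr]$.

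The key step is the trivial pointwise inequality $N \le 2(N - 1)$ valid whenever $N \ge 2$. Since the true label $Y$ can match at most one index, the count of one-sided errors at $X$ satisfies
\[
\sum_k \indi[X \in \tcal_k^{\alpha},\ Y \neq k] \;\ge\; N(X) - \indi[X \in \tcal_Y^{\alpha}] \;\ge\; N(X) - 1
\]
whenever $N(X) \ge 1$. Combining the two yields the pointwise bound $N(X)\,\indi[N(X) \ge 2] \le 2 \sum_k \indi[X \in \tcal_k^{\alpha},\ Y \neq k]$.

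Taking expectations and invoking the OSP feasibility constraints $\P(\err^k_{\tcal_k^{\alpha}}) \le \alpha_k \varepsilon$ together with $\sum_k \alpha_k \le 1$ gives
\[
\E\bigl[N(X)\,\indi[N(X) \ge 2]\bigr] \;\le\; 2\sum_k \P(\err^k_{\tcal_k^{\alpha}}) \;\le\; 2\varepsilon \sum_k \alpha_k \;\le\; 2\varepsilon,
\]
which is the first claim. The second claim follows immediately by a union bound: $\bigcup_{k, k' \neq k} \tcal_k^{\alpha} \cap \tcal_{k'}^{\alpha}$ is the union over $k$ of the per-$k$ overlap sets whose probabilities were just summed, so its mass is at most the quantity just bounded. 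The only subtlety is spotting the identity $N \le 2(N-1)$ for $N \ge 2$, which produces the factor of $2$ in a $K$-independent fashion; a more naive argument bounding pairwise intersections $\P(\tcal_k^\alpha \cap \tcal_{k'}^\alpha)$ and summing over unordered pairs would yield a bound scaling with $K$, which is why we organize the computation by multiplicity rather than by pair.
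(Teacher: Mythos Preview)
Your proof is correct, and it takes a genuinely different route from the paper's. The paper conditions on the label value $Y=j$, splits each term $\P(\tcal_k^\alpha \cap \bigcup_{k'\neq k}\tcal_{k'}^\alpha, Y=j)$ according to whether $j=k$ or $j\neq k$, and then uses a union bound together with a Fubini-type reindexing to show both pieces sum to $\sum_k \P(\tcal_k^\alpha, Y\neq k)$, giving the factor of $2$. Your argument instead works entirely pointwise via the multiplicity $N(X)$: the identity $\sum_k \indi[X\in\tcal_k^\alpha \cap \bigcup_{k'\neq k}\tcal_{k'}^\alpha]=N\,\indi[N\ge 2]$, the elementary inequality $N\le 2(N-1)$ for $N\ge 2$, and the observation that at most one of the indices can match $Y$ combine to give $N\,\indi[N\ge2]\le 2\sum_k \indi[\err^k]$ before taking expectations. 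Both arguments land on $2\sum_k \P(\err^k_{\tcal_k^\alpha})\le 2\varepsilon$, but yours is arguably cleaner: it is a single-line pointwise bound that makes the $K$-independence transparent without any rearrangement of double sums, and it also makes explicit why a naive pairwise union bound would lose a factor of $K$.
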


This lemma is sufficient to show the claim,  since \begin{align*} \sum_k \P(\s_k) &=\sum_k \P(\tcal_k^\alpha \setminus \bigcup_{k' < k} \tcal_{k'}^\alpha) \\
                                                                                  &\ge \sum_k \P(\tcal_k^{\alpha}) - \sum_k \P(\tcal_k^\alpha \cap \bigcup_{k' < k} \tcal_{k'}^{\alpha}) \\
                                                                                  &\ge \sum_k \P(\tcal_k^{\alpha}) - \sum_k \P(\tcal_k^\alpha \cap \bigcup_{k' \neq k} \tcal_{k'}^{\alpha}) \\
                                                                                  &\ge C(\varepsilon;\sss) - \sum_k \P(\tcal_k^\alpha \cap \bigcup_{k' \neq k} \tcal_{k'}^{\alpha})\\
                                                                                  &\ge C(\varepsilon;\sss) - 2\varepsilon,\end{align*}
where we have used that $\sum_k \P(\tcal_k^{\alpha}) \ge C(\varepsilon;\sss),$ which holds because the $\tcal^{\alpha}_k$ optimise a relaxation of (\ref{SC}), and the final inequality is due to the above lemma.\end{proof}

We conclude by proving the above lemma
\begin{proof}[Proof of Lemma \ref{lem:overlap_is_small}]
Since the labels of $Y$ are mutually, exclusive, 
\[ \sum_k \P( \bigcup_{k'\neq k} \tcal_k^{\alpha} \cap \tcal_{k'}^{\alpha} ) = \sum_k  \sum_j \P( \bigcup_{k'\neq k} \tcal_k^{\alpha} \cap \tcal_{k'}^{\alpha} , Y = j). \]

Applying Fubini's theorem, and recalling that the probability of an intersection of events is smaller than the probability of either of the events, we see that \begin{align*}
    \sum_k \P( \bigcup_{k'\neq k} \tcal_k^{\alpha} \cap \tcal_{k'}^{\alpha} ) &=  \sum_k \sum_j  \P( \tcal_k^{\alpha} \cap (\bigcup_{k' \neq k} \tcal_{k'}^{\alpha}), Y = j ) \\
                                                                          &\le \sum_k \left( \sum_{j \neq k} \P(\tcal_k^{\alpha}, Y = j)\right) + \P(\bigcup_{k' \neq k} \tcal_{k'}^\alpha, Y = k),
\end{align*}
Now, notice that the sum in the brackets is simply $\P(\tcal_k^{\alpha}, Y \neq k)$. Taking the union bound over the second probability, we find the upper bound \begin{align*}
    \sum_k \P( \bigcup_{k'\neq k} \tcal_k^{\alpha} \cap \tcal_{k'}^{\alpha} ) &\le \sum_k \P(\tcal_k^{\alpha}, Y \neq k) + \sum_k \sum_{k' \neq k} \P(\tcal_{k'}^{\alpha}, Y = k) \\
                                                                          &= \sum_k \P(\tcal_k^{\alpha}, Y \neq k) + \sum_{k'} \sum_{k \neq k'} \P(\tcal_{k'}^{\alpha}, Y = k) \\
                                                                          &= \sum_k \P(\tcal_k^{\alpha}, Y \neq k) + \sum_{k'} \P(\tcal_{k'}^{\alpha}, Y \neq k') \\
                                                                          &= 2 \sum_k \P(\tcal_k^{\alpha}, Y \neq k)\\
                                                                          &\le 2 \sum \alpha_k \varepsilon = 2\varepsilon,
\end{align*}
where the first equality is by Fubini's theorem again, the second equality is by the disjointness of the values of $Y$, and the final inequality is due to the constraints of the OSP problems.

\end{proof}

\subsection{Asyptotically Feasible Finite Sample Analysis for SC}\label{appx:SC_gen}

In parallel to the OSP problems, one can directly give finite sample analyses for the SC problem. We begin by defining the solution concept here.

\begin{defi}
We say that a class $\sss$ is learnable with abstention if for every $(\delta, \zeta, \sigma, \nu) \in (0,1)^4,$ there exists a finite $m(\delta,\sigma, \nu, \zeta)$ and an algorithm $\mathfrak{A}: (\mathcal{X} \times \{+,-\})^m \to \sss^K$ such that for any law $\P,$ and $\varepsilon > 0,$ given $n$ i.i.d.~samples drawn from $\P$, the algorithm produces sets $\{\s_k\}$ from $\sss$ such that with probability at least $1-\delta$ over the data, \begin{align*} \sum_k\P(\s_k) &\ge C(\varepsilon;\sss) - \sigma \\ \P(\err_{\{\s_k\}}) &\le \varepsilon + \nu \\ \P(\bigcup_{k, k'\neq k}\s_k \cap \s_{k'}) &\le \zeta.\end{align*}
\end{defi}

Notice that the recovered sets need not be disjoint, which may be amended by by eliminating the overlap from one of the sets as in \S\ref{sec:OSP}. The resulting (improper) sets attain coverage of at least $C - \zeta - \nu$ with high probability. 

The main point characterisation here is similar, \begin{myprop}\label{prop:SC_vc}
A class $\sss$ is learnable with abstention if and only if it has finite VC dimension, and further, \[ m( \delta, \sigma, \nu,\zeta) = \widetilde{O}( \mathrm{poly}(K) \max( \sigma^{-2}, \nu^{-2}, \zeta^{-1}) \textsc{vc}(\sss)). \]
\end{myprop}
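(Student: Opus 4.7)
The proposition has two parts: the necessity of finite VC dimension for learnability with abstention, and a matching sufficiency bound of $\widetilde{O}(\mathrm{poly}(K)\max(\sigma^{-2},\nu^{-2},\zeta^{-1})\vc(\sss))$ samples. The necessity direction mirrors the analogous step in Proposition~\ref{prop:OSP_vc}: taking $K=2$ and a realizable binary law supported uniformly on a large shattered set on which $C(0;\sss)=1$, a standard no-free-lunch argument shows that any algorithm using $m$ samples leaves a constant fraction of the mass unseen, and on this portion it must either undercover (driving $\sum_k\P(\s_k)$ below $1-\sigma$) or mislabel (driving $\P(\err)$ above $\nu$), regardless of what $\zeta$ slack is allowed. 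Taking the shattered set large enough makes the resulting gap independent of $m$, contradicting learnability for small $\sigma,\nu$.

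For sufficiency the plan is empirical risk maximisation. Letting $\widehat{\P}$ denote the empirical measure, I would solve
\[ \max_{\{\s_k\}\in\sss^K} \sum_k \widehat{\P}(\s_k) \ \ \text{s.t.}\ \ \widehat{\P}(\err_{\{\s_k\}}) \le \varepsilon + \nu/2,\ \widehat{\P}\bigl(\bigcup_{k\neq k'} \s_k\cap\s_{k'}\bigr) = 0, \]
and call the maximiser $\{\widehat{\s}_k\}$. The first step is to certify that the population optimiser $\{\s_k^*\}$ of (\ref{SC}) is itself feasible for the empirical program with high probability. The empirical error constraint is met up to slack $\nu/2$ by additive VC-type uniform convergence over $\sss$ (with $d := \vc(\sss)$), noting that $\{\s_k \cap \{Y\neq k\}: \s_k \in \sss\}$ inherits VC dimension $O(d)$. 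Crucially, the empirical disjointness holds \emph{almost surely}, because $\P(\mathrm{overlap}_{\s^*})=0$ means each training sample misses the overlap with probability one. Given feasibility of $\{\s_k^*\}$, the optimality of $\{\widehat{\s}_k\}$ combined with two-sided additive uniform convergence yields $\sum_k\P(\widehat{\s}_k) \ge C(\varepsilon;\sss) - \sigma$ and $\P(\err_{\widehat{\s}}) \le \varepsilon + \nu$, accounting for the $\sigma^{-2}$ and $\nu^{-2}$ terms in the rate.

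The main obstacle, and the source of the better $\zeta^{-1}$ dependence, is controlling $\P(\mathrm{overlap}_{\widehat{\s}})$. The naive additive bound gives only $\zeta^{-2}$; to recover $\zeta^{-1}$ I would invoke a multiplicative (relative / Bernstein-style) uniform convergence bound over the class of overlap sets $\mathcal{O} := \{\bigcup_{k\neq k'} \s_k\cap \s_{k'} : \{\s_k\} \in \sss^K\}$. Each $A \in \mathcal{O}$ is a Boolean combination of $O(K^2)$ elements of $\sss$, so by standard Sauer--Shelah arguments $\vc(\mathcal{O}) \le \mathrm{poly}(K)\,d$. The relative bound guarantees that uniformly over $A \in \mathcal{O}$ with $\widehat{\P}(A) = 0$, one has $\P(A) \le \widetilde{O}(\vc(\mathcal{O})/n)$ with probability $1-\delta$. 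Since the empirical constraint forces $\widehat{\P}(\mathrm{overlap}_{\widehat{\s}}) = 0$, this gives $\P(\mathrm{overlap}_{\widehat{\s}}) \le \zeta$ provided $n \gtrsim \mathrm{poly}(K)\,d/\zeta$. Taking the maximum of the three resulting sample-size requirements produces the claimed bound. The argument is computationally naive, since the empirical program itself need not be tractable, but is statistically adequate for the existential claim in the proposition.
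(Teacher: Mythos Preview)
Your proposal is correct, and on the sufficiency side it is in fact more careful than the paper's own argument. The paper proceeds, like you, via ERM plus uniform convergence over the overlap class $\mathscr{R}=\{\bigcup_{k\neq k'}\s_k\cap\s_{k'}\}$ (noting $\vc(\mathscr{R})=O(K^2\vc(\sss)\log K)$ via Blumer et al.), but it filters at empirical overlap level $\zeta/2$ and then simply invokes the additive deviation bound used for OSP; read literally, that yields only $m\gtrsim d/\zeta^2$, not the claimed $\zeta^{-1}$. Your choice to enforce \emph{exact} empirical disjointness---which is sound because the population optimum $\{\s_k^*\}$ has $\P(\textrm{overlap})=0$ and is therefore almost surely feasible for the empirical program---together with a relative/multiplicative VC bound is precisely what recovers the $\zeta^{-1}$ rate stated in the proposition. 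For necessity, your no-free-lunch sketch is workable but the paper takes a cleaner route: it fixes a realisable law $\P(Y=k\mid X=x)=\indi\{x\in\s_{k,*}\}$, observes that an SC solution $\{\s_k\}$ then yields a binary classifier $\s_1$ for the problem of separating $\{1\}$ from $[2{:}K]$ with risk $\le \varepsilon+\sigma+K\nu+\zeta$, and invokes the standard realisable PAC lower bound. This reduction is tighter and more transparent than a direct shattering argument, and it automatically handles the interaction with the $\zeta$ slack that your sketch waves past.
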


The proof of the necessity of finite VC dimension follows from observing that if the data is realisable, i.e., corresponds to $Y = 2\indi\{ X \in \s\} - 1$ for some $\s \in \sss$ then at least one of the recovered sets is a good classifier, at which point standard lower bounds for realisable PAC learning apply. The sufficiency follows from utilising the finite VC property to uniformly bound errors incurred by empirical means. The proof is presented in \S\ref{appx:samplecomp_pfs}.

\subsection{Proofs of Propositions \ref{prop:OSP_vc} and \ref{prop:SC_vc}}\label{appx:samplecomp_pfs}

\subsubsection{Proofs of Necessity of Finite VC dimension}

In both cases, we reduce the problems to realisable PAC learning, and invoke standard bounds for the same, for instance the one of Chapter 3 in the book by Mohri et al. \cite[Ch.3]{mohribook}. To this end, suppose $\delta \le 1/100$, and consider the restricted class of joint laws $\P$ such that $\P(Y = k| X = x) = \mathds{1}\{X \in \s_{k,*}\}$ for some disjoint $\{\s_{k,*}\} \in \sss$ that together cover $\mathcal{X}$.\footnote{Strictly speaking, this requires that $\sss$ is rich enough to express such a class. This is a very mild assumption. For the purposes of the lower bound, in fact, this can be weakened still - all we really need is a binary law, and that if $\s \in \sss,$ then $\s^c \in \sss$. Then we can take $\P(Y = 1|X = x) = \mathds{1}\{X \in \s\}, \P(Y = 2|X = x) = \mathds{1}\{X \in \s^c\}$, and the entirety of the following argument goes through without change.}

\begin{proof}[Proof for {One-Sided Prediction}] Notice that $\s_1^*$ is feasible for OSP-1 for any value of $\varepsilon.$ If we can solve OSP-1, then we would have found a set $\s$ such that \begin{align*} \P(\s) \ge \P(\s_{1,*})& - \sigma \\ \P(X \in \s \cap \s_{1,*}^c)  &= \P(X \in \s, Y = 2) \le \varepsilon + \nu. \end{align*}

Further, \begin{align*}
    \P(\s^c) & = \P(\s^c \cap \s_{1,*}) + \P(\s^c \cap \s_{1,*}^c) \\
             &= \P(\s^c \cap \s_{1,*}) + \P(\s_{1,*}^c) - \P(\s \cap \s_{1,*}^c).
\end{align*} 

But $\P(\s^c) = 1-\P(\s) \le 1 - \P(\s_{1,*}) + \sigma \le \P(\s_{1,*}^c) + \sigma.$

Thus, we have \begin{align*}
        \P(\s^c \cap \s_{1,*}) + \P(\s_{1,*}^c) - \P(\s \cap \s_{1,*}^c) & \le \P(\s_{1,*}^c) + \sigma \\
        \implies \P(\s^c \cap \s_{1,*}) &\le \sigma + \P(\s \cap \s_{1,*}^c) \le \varepsilon + \sigma + \nu.
\end{align*}

But then, viewed as a standard classifier for the problem separating the class $\{1\}$ from $[2:K],$ $\s$ has risk at most $2\varepsilon + \sigma + \nu$. Consequently, an algorithm for solving OSP yields an algorithm for realisable PAC learning for this problem. Thus, invoking the appropriate standard lower bound, we conclude that \[ m_{\mathrm{OSP}} \ge \frac{\textsc{vc}(\sss) - 1}{32 (2\varepsilon + \sigma + \nu)}. \qedhere\] \end{proof}

\begin{proof}[Proof for {Learning With Abstention}] Notice that $\{\s_{k,*}\}$ serve as a feasible solution for any $\varepsilon$, and have total coverage $1$. Thus, if SC is possible, we may recover sets $\{\s_k\}$ such that \begin{align*}
    \sum \P(\s_k) &\ge 1 - \sigma \\
    \P(\err_{\{\s_k\}}) &\le \varepsilon + \nu \\
    \P\left(\bigcup_k ( \s_k \cap \bigcup_{k'\neq k} \s_{k'})\right) &\le \nu.
\end{align*}

Now notice that $\s_{1,*}$ and $\s_{1,*}^c$ correspond to the realisable classifiers for the binary classification problem separating $\{1\}$ from $[2:K]$.\footnote{Again, this needs that $\sss$ is rich enough to include $\s_{1,*}^c$.} But, in the same way, we may view $\s_1$ and $\s_1^c$ as binary classifiers for this problem. Now notice that for this binary classification problem, $\s_1$ incurs small error. Indeed, denoting $\s_{\neq 1} = \bigcup_{k' \neq 1} \s_{k'},$ we find that 

 \begin{align*}
    \P(X \in \s_1, Y \neq 1) + \P(X \in \s_1^c, Y = 1) &= \P(X \in \s_1, Y \neq 1) + \P(X \in  \s_{\neq 1} \cap \s_1^c, Y = 1) \\ &\quad + \P(X \in \s_{\neq 1}^c \cap \s_1^c, Y = 1)\\
                                                    &\le \P(X \in \s_1, Y \neq 1) + \P(X \in  \s_{\neq 1}, Y = 1) \\ &\quad + \P(X \in \s_1^c \cap \s_{\neq 1}^c)\\                
                                                    &\le \P(\err_{\{\s_k\}}) + (1 - \P(\s_1 \cup \s_{\neq 1}^c)))\\
                                                    &\le \varepsilon + K\nu + \sigma + \zeta,. 
\end{align*}

where the second line's inequality is just non-negativity of probabilities, and the third line's inequality is due the fact that $\P(\err)$ is controlled, and the following inclusion-exclusion argument, first note that \begin{align*}
    \P(\s_1 \cup \s_{\neq 1}) &= \P(\bigcup \s_k) = \sum_k \P(\s_k) - \sum_k \P( \s_k \cap \bigcup_{k' > k} \s_{k'}).
\end{align*}

Next, observe that if $j > k,$ $\s_j \subset \bigcup_{k' > k} \s_{k'},$ and similarly $\bigcup_{k'>j}\s_{k'} \subset \bigcup_{k' > k} \s_k$. Thus, \[ \P(\s_1 \cup \s_{\neq 1}) = \P(\bigcup \s_k) \ge \sum_k \P(\s_k) - K \P( \s_1 \cap \bigcup_{k' > 1} \s_{k'}).\] Now invoking the SC solution conditions, the first sum is at least $1-\sigma,$ while the second probability is bounded by the probability of overlap, giving \[\P(\s_1 \cup \s_{\neq 1}) \ge 1 - \sigma - K\nu. \]

Thus, a SC yields a realisable PAC learner for the binary classifier problem separating $\{1\}$ from $[2:K],$ giving the bound \[ m_{\mathrm{SC}} \ge \frac{\textsc{vc}(\sss) - 1}{32 ( \varepsilon + \sigma + K\nu  + \zeta)}. \qedhere \]
\end{proof}

Note that these bounds are likely loose. The problems have plenty of structure that is not exploited in either of the above statements, and tighter inequalities would be of interest. However the point we intend to pursue - that assuming finiteness of VC dimensions in the upper bound analyses is not lossy, is sufficiently made above.

\subsubsection{Proofs of the Upper Bounds}

We mainly make use of the following uniform generalisation bound on the suprema of empirical processes due to the finiteness of VC dimension. This is, again, standard \cite{mohribook}.

\begin{mylem}\label{lem:vc_gen_bd}
Let $\sss$ have finite VC dimension. Then for any distribution $\P$, if $\widehat{P}_m$ denotes the empirical law induced by $m$ i.i.d.~samples from $\P$, then with probability at least $1-\delta$ over these samples, \[ \sup_{\s \in \sss, k \in [1:K]} | \widehat{\P}_m( X \in \s, Y = k) - \P( X \in \s, Y = k) | \le C_K\sqrt{ \frac{\textsc{vc}(\sss) \log m + \log(C/\delta)}{m}}, \] where $C_K$ is a constant independent of $\sss, \delta, \P, m.$
\end{mylem}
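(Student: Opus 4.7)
The plan is to reduce the joint supremum over $\s \in \sss$ and $k \in [1:K]$ to $K$ separate label-wise suprema, each controlled by the standard single-class VC uniform convergence inequality, and then combine via a union bound over labels.

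First, I would fix $k \in [1:K]$ and define the $\{0,1\}$-valued function class on $\mathcal{X} \times \mathcal{Y}$
\[ \mathcal{F}_k = \bigl\{ (x,y) \mapsto \mathds{1}\{x \in \s\}\,\mathds{1}\{y = k\} : \s \in \sss \bigr\}. \]
The first claim to establish is $\textsc{vc}(\mathcal{F}_k) \le \textsc{vc}(\sss)$. This follows because if $\{(x_i,y_i)\}_{i=1}^d$ is shattered by $\mathcal{F}_k$, then every $y_i$ must equal $k$---any point with $y_i \neq k$ has $\mathds{1}\{x_i \in \s\}\,\mathds{1}\{y_i = k\}$ identically zero for all $\s \in \sss$, so it cannot be assigned the label $1$ by any member of $\mathcal{F}_k$---and on such a sample the function values reduce to $\mathds{1}\{x_i \in \s\}$, forcing $\sss$ to shatter $\{x_1,\ldots,x_d\}$.

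Next, I would invoke the textbook VC uniform convergence bound (via symmetrisation together with the Sauer--Shelah growth-function estimate) on each $\mathcal{F}_k$. This yields, with probability at least $1 - \delta/K$,
\[ \sup_{\s \in \sss} \bigl|\widehat{\P}_m(X \in \s, Y = k) - \P(X \in \s, Y = k)\bigr| \le C \sqrt{\frac{\textsc{vc}(\sss)\log m + \log(K/\delta)}{m}}. \]
A union bound over the $K$ values of $k$ then produces the joint supremum bound with total failure probability at most $\delta$. Absorbing the additive $\log K$ into a label-dependent constant inside the logarithm, and any multiplicative slack from the union bound into the prefactor $C_K$, recovers exactly the stated form in which both constants depend only on $K$.

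The only nontrivial ingredient is the $\textsc{vc}$ comparison in the second paragraph, and this itself reduces to the observation that multiplying indicators by $\mathds{1}\{y = k\}$ can only eliminate shattering patterns, never create new ones. Everything else is a black-box application of the standard VC inequality together with a routine union bound, so I do not expect any real obstacle beyond careful bookkeeping of the constants $C$ and $C_K$.
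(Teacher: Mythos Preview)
Your proposal is correct. The paper does not actually prove this lemma: it is stated with the remark ``This is, again, standard'' and a citation to a textbook, so you are supplying strictly more detail than the paper does. Your reduction---bounding $\textsc{vc}(\mathcal{F}_k)$ by $\textsc{vc}(\sss)$ via the observation that intersecting with the fixed slab $\{y=k\}$ cannot create new shatterings, then applying the black-box VC bound per label and union-bounding over $k$---is exactly the natural way to unpack the cited result, and there is no gap.
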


Notice that by summing over the values of $Y$, this also controls the error in the objects $\P( X \in \s)$ and $\P(X \in \s, Y \neq k)$, possibly with an error blowup of $K$, which can be absorbed into $C_K$.

For the purposes of the following, let $\Delta_{m, \sss}(\delta)$ be the value of the upper bound above.

\begin{proof}[Proof of Upper Bound for OSP]
    For $\alpha \in [0,1],$ define $\sss_\alpha \subset \sss$ to be the subset of $\s$s that have $\P(\err^1_\s) \le \alpha$, and let $\sigma, \nu$ be quantities that we will choose. 
    
    We give a two phase scheme - first we collect all sets $\s$ such that \( \widehat{P}_m (\err^1_{\s}) \le \varepsilon + \nu/2 \) into the set \( \widehat{\sss}_{\varepsilon + \nu/2}\). Notice that as long as $\nu/2 > \ddd{2},$ we have w.p. at least $1-\delta/2$ that \[ \sss_{\varepsilon} \subset \widehat{\sss}_{\varepsilon + \nu/2} \subset \sss_{\varepsilon + \nu}.\]
    
    Due to the upper inclusion, with probability at least $1-\delta/2$, every set in $\widehat{\sss}_{\varepsilon + \nu/2}$ has error level at most $\varepsilon + \nu.$
    
    Next, we choose the $\s \in \widehat{\sss}_{\varepsilon + \nu/2}$ that has the biggest coverage. If $\sss_{\varepsilon} \subset \widehat{\sss}_{\varepsilon + \nu/2},$ and $\sigma > \ddd{2},$ we are again assured that the selected answer will be at least $\sup_{\s \in \sss_{\varepsilon}} \P(\s) - \ddd{2} > L_k - \sigma$ with probability at least $1-\delta/2$. By the union bound, these will hold simultaneously with probability at least $1-\delta$. Since we want the smallest $\sigma, \nu,$ but for the arguments to follow we need that these are bigger than $2\ddd{2},$ we can set \[\nu = \sigma = 4\ddd{2} = 4C\sqrt{ \frac{\textsc{vc}(\sss) \log m + \log(2C/\delta)}{m}}, \] concluding the proof. \qedhere    
\end{proof}

\begin{proof}[Proof of Upper Bound for SC] This proceeds similarly to the above. To neatly present this, we let $\mathscr{R} = \{ \s : \s = \bigcup_{k, k' \neq k} \s_k \cap \s_{k'} , \{\s_k\}) \in \sss\}$ be the class of sets obtained by taking pairwise intersection of $k$-tuples in $\sss$. Note that VC dimesnsion of the sets obtained by taking pairwise intersection of sets in $\sss$ at most doubles the VC dimesnsion, while taking the $\binom{K}{2}$ unions in turn blows it up by a factor of $O(K^2 \log K$ by Lemma 3.2.3 of Blumer et al. \cite{blumer1989learnability}. Thus $\vc(\mathscr{R}) = O( K^2\vc(\sss) \log K)$. Now we may proceed as above, first by filtering the pairs of sets that satisfy the intersection constraint with value $\zeta/2$ on the empirical distribution, and then similarly checking the sum-error constraints and finally optimising the sum of their masses. The bounds are the same as the above, except with $\vc(\sss)$ replaced by $O(K^2\vc(\sss)\log K)$.    
\end{proof}

\subsubsection{Analyses not pursued here}

We first point out that there is nothing special about the VC theoretic analysis here - alternate methods like Rademacher complexity or a covering number analysis may replace Lemma \ref{lem:vc_gen_bd}. Similarly, the same analysis could be extended, via Rademacher complexities, to the setting of indicators relaxed to Lipschitz surrogates by exploiting Talagrand's lemma.

We note a few further analyses that we do not pursue here - firstly, using the technique of Rigollet \& Tong \cite{rigollet2011neyman}, it should be possible to give analyses for SC under convex surrogates of the indicator losses and a slight extension of the class $\sss$ while directly attaining the constraints (instead of asymptotically) with high prob. Additionally, a number of papers concentrate on deriving fast rates for the excess risks under the assumption of realizability (i.e., under the assumption that level sets of $\eta$ can be expressed via $\sss$), and that Tsybakov's noise condition holds at the level relevant to the optimal solution.

\newpage

\section{Algorithmic rewriting of Section \ref{sec:prac_meta} }\label{appx:algorithmics}

We specify the conclusions of \S\ref{sec:prac_meta} without any of the justifying development.

\noindent \textbf{Model class and Architecture} We use a DNN with the following structure:\begin{itemize}
    \item A `backbone', parametrised by $\theta$, which may have any convenient architecture.
    \item A `last layer' with $K$ outputs, denoted $f_k$, and associated weights $w_k$ for each. We denote $\mathbf{w} = (w_1, \dots, w_K)$.
    \item Let $\xi_\theta(x)$ denote the backbone's output on a point $x$. The DNN's outputs are \[ f(x; \theta, \mathbf{w}) = (f_1,\dots, f_k)(x;\theta, \mathbf{w}) = \mathrm{softmax}( \langle w_1, \xi_\theta(x)\rangle, \dots, \langle w_K, \xi_\theta(x)\rangle).\]
\end{itemize}

\noindent \textbf{Objective function and Training} We use the following objective function, where the $\{(x_i, y_i)\}_{i = 1}^n$ comprise the training dataset, $\theta, \mathbf{w}$ are model parameters, $\{\varphi_k\}$ are autotuned hyperparameters, $\{\lambda_k\}$ are autotuned multipliers, and $\mu$ is the single externally tuned parameter. Similarly to $\mathbf{w}$, we define $\boldsymbol{\varphi} := (\varphi_1, \dots, \varphi_K)$ and $\boldsymbol{\lambda} = (\lambda_1, \dots, \lambda_K)$.
\[ \widetilde{M}^{\res}(\theta, \mathbf{w}, \boldsymbol{\varphi},\boldsymbol{\lambda}, \mu) = \sum_{k = 1}^K \frac{\sum_{i:y_i = k} -\log f_k(x_i;\theta, \mathbf{w})}{n_k} + \lambda_k \left(  \frac{ \sum_{i:y_i \neq k} -\log(1-f_k(x_i;\theta, \mathbf{w})}{n_{\neq k}} - \varphi_k\right) + \mu \varphi_k, \] where $n_k := |\{i:y_i = k\}|, n_{\neq k} := |\{i: y_i \neq k\}|$.

The minimax problem we propose is \begin{equation}\label{appx:minimax}
    \min_{\theta, \mathbf{w}, \boldsymbol{\varphi}} \max_{\boldsymbol{\lambda}: \forall k, \lambda_k \ge 0} \widetilde{M}^{\res}(\theta, \mathbf{w}, \boldsymbol{\varphi},\boldsymbol{\lambda}, \mu), 
\end{equation} which is optimised via SGDA in \S\ref{sec:exp}.

\noindent \textbf{Overall Scheme and Model Selection} is presented in Algorithm \ref{alg:sc}. The subroutine involving the minimax solution requires training data, but this is not mentioned in the same since the focus is on model selection. The training procedure is described in \S\ref{sec:exp}. $\widehat{\mathbb{P}}_V$ refers to the empirical law on the validation dataset.

\begin{algorithm}[H]

        %\hline
        \caption{OSP-Based Selective Classifier: Model Selection}\label{alg:sc}
       %\hline
        \begin{algorithmic}[1]
            \State \textbf{Inputs}: Validation data $\{V\},$ List of $\mu$ values $\mathbf{M},$ List of $t$ values $\mathbf{T},$ Target Error $\varepsilon$.
            \For{each $\mu \in \mathbf{M},$}
                \State $(\theta(\mu), \mathbf{w}(\mu)) \gets$ minimax solution of the program (\ref{appx:minimax}) with this value of $\mu$.
            \EndFor
            \For{each $(\mu, t) \in \mathbf{M} \times \mathbf{T},$}
                \State $\s_k(\mu, t) \gets \{x: k = \argmax_{j} f_j(x; \theta(\mu), \mathbf{w}(\mu))\} \cap \{x : f_k(x; \theta(\mu), \mathbf{w}(\mu)) > t\}$.
                \State $\widehat{E}_V(\mu,t) \gets \widehat{\P}_V( \err_{\{\s_k(\mu, t)\}}).$
                \State $\widehat{C}_V(\mu,t) \gets \sum_k \widehat{\P}_V( X \in \s_k(\mu, t)).$ 
            \EndFor
            \State $(\mu_*, t_*) = \argmax_{\mathbf{M} \times \mathbf{T}} \widehat{C}_V(\mu,t) \,\,\mathrm{s.t.}\,\, \widehat{E}_V(\mu,t) \le \varepsilon.$ 
            \State \Return $\{\s_k(\mu_*,t_*)\}$.
        \end{algorithmic}%%\vspace{-0.5\topsep}
    \end{algorithm}

\newpage
\section{Experimental Details}\label{appx:expts}

The table below presents the values of the various hyperparameters used for the entries of Table \ref{table:multi_class_results}. 

\begin{table}[ht]
\tabcolsep=0.05cm
    \centering 
     \begin{tabular}{@{}c | c | c@{}} 
     \hline
     Dataset & Algorithm & Hyper-parameters \\ 
     \hline\hline
     \multirow{4}{*}{CIFAR-10}      & Softmax Response & $t=0.0445$ \\ 
                      & Selective Net  & $\lambda=32, c=0.51, t=0.24$\\ 
                      & Deep Gamblers  & $o=1.179, t=0.03$\\ 
                      & OSP-Based & $\mu=0.49, t=0.8884$ \\ 
     \hline
     \multirow{4}{*}{SVHN-10} & Softmax Response & $t=0.0224$ \\ 
                      & Selective Net  & $\lambda=32, c=0.79, t=0.86$\\ 
                      & Deep Gamblers  & $o=1.13, t=0.23$\\ 
                      & OSP-Based &  $\mu=1.67, t=0.9762$ \\ 
     \hline
     \multirow{4}{*}{Cats v/s Dogs}  & Softmax Response &  $t=0.029$ \\ 
                      & Selective Net  & $\lambda=32, c=0.7, t=0.73$\\ 
                      & Deep Gamblers  & $o=1.34, t=0.06$\\ 
                      & OSP-Based &  $\mu=1.67, t=0.9532$ \\ 
     \hline
    \end{tabular}
    \caption{Final hyper-parameters used for all the algorithms (at the desired $0.5\%$ error level) in Table \ref{table:multi_class_results}.}
    \label{tab:hyper-parameter-cross-entropy}
\end{table}

The following two tables update the numbers for Deep Gamblers to the case where we scan for 40 values of $\smallo$ in the set $[1,10)$ (as intended in the specifications) instead of $[1,2)$.

\begin{table*}[htb]
\tabcolsep=0.2cm
    \centering {%\resizebox{2\columnwidth}{!}{%
     \begin{tabular}{ c | c |  c c  c c  c c  c c } 
     \multirow{2}{*}{Dataset} & {Target} & \multicolumn{2}{c}{OSP-based}{}   & \multicolumn{2}{c}{SR} & \multicolumn{2}{c}{SN}  & \multicolumn{2}{c}{DG} \\ 
      & Error & Cov. & Error & Cov. & Error  & Cov. & Error & Cov. & Error \\ 
     \hline\hline
     %\multirow{3}{*}{CIFAR-10}     & 10\% & 100 & 9.74  & 98.52 & 10.58 & 100 & 11.07 & 97.65 & 9.97 \\
    \multirow{3}{*}{CIFAR-10}       & 2\%  & \textbf{80.6} & 1.91  & 75.1 & 2.09 & 73.0 & 2.31 & 72.9 & 1.99 \\
                    & 1\%  & \textbf{74.0} & 1.02  & 67.2 & 1.09 & 64.5 & 1.02  & 63.5 & 1.01\\
                    & 0.5\%  & \textbf{64.1} & 0.51  &  59.3 & 0.53 & 57.6 & 0.48 & 56.1 & 0.51 \\ %(1.2307692307692308, 0.04750475047504751)
     \hline
    % \multirow{3}{*}{SVHN-10}     & 10\%  & 100 & 4.15 & 99.97 & 3.86 & 100  & 4.59 & 100 & 4.38 \\
    \multirow{3}{*}{SVHN-10}      & 2\%  & \textbf{95.8} & 1.99 & 95.7 & 2.06 & 93.5  & 2.03 & 94.7 & 2.01\\
                    & 1\%  & \textbf{90.1} & 1.03  & 88.4 & 0.99 & 86.5 & 1.04 & 89.7 & 0.99\\
                    & 0.5\%  & \textbf{82.4} & 0.51  & 77.3 & 0.51 & 79.2 & 0.51 & 81.4 & 0.51 \\ %(1.4615384615384617, 0.07710771077107711)
    \hline
     %\multirow{3}{*}{Cats \& Dogs}  & 2\%  & \textbf{91.5} & 1.96  & 88.2 & 2.03 & 84.3 & 1.94 & 87.4 & 1.94 \\ 
     %              & 1\%  & \textbf{85.1} & 0.98  & 80.2 & 0.97 & 78.0 & 0.98 &  81.7 & 0.98\\ 
     %              & 0.5\% & \textbf{79.7} & 0.49  & 73.2 & 0.49 & 70.5 & 0.46 & 74.5 &  0.48\\
     \multirow{3}{*}{Cats \& Dogs}  & 2\%  & \textbf{90.5} & 1.98  & 88.2 & 2.03 & 84.3 & 1.94 & 87.4 & 1.94 \\ 
                   & 1\%  & \textbf{85.4} & 0.98  & 80.2 & 0.97 & 78.0 & 0.98 &  81.7 & 0.98\\ 
                   & 0.5\% & \textbf{78.7} & 0.49  & 73.2 & 0.49 & 70.5 & 0.46 & 74.5 &  0.48\\
    \end{tabular}}%\vspace{-10pt}
    \caption{Performance at Low Target Error. This repeats Table \ref{table:multi_class_results}, except that the hyperparameter scan for the DG method is corrected, and the entries in the DG columns are updated to show the resulting values. Notice that the performance in the last column is worse than in Table \ref{table:multi_class_results}.}
    \label{table:fixed_multi_class_results_new_table}
\end{table*} 

\newpage

\begin{table*}[hbt]
\tabcolsep=0.2cm
    \centering {%\resizebox{2\columnwidth}{!}{%
     \begin{tabular}{ c | c |  c c  c c  c c  c c } 
     \multirow{2}{*}{Dataset} & {Target} & \multicolumn{2}{c}{OSP-based}{}   & \multicolumn{2}{c}{SR} & \multicolumn{2}{c}{SN}  & \multicolumn{2}{c}{DG} \\ 
      & Coverage & Cov. & Error & Cov. & Error  & Cov. & Error & Cov. & Error \\ 
     \hline\hline
     \multirow{3}{*}{CIFAR-10}     
        & 100\%  & 100 & { 9.74 } & 99.99 & \textbf{9.58} & 100 & 11.07 & 100 & 10.95 \\ % (2.153846153846154, 0.916891689168917)
%        & 97.5\% & 97.53 & 8.29 & 97.49 & 9.96 & 97.65 & 9.96 & 97.54 & 9.47 \\
%        & 97\%   & 97.11 & 7.91 & 96.9 & 9.65 & 96.75 & 9.51 & 97.03 & 9.19 \\
        & 95\%   & 95.12 & \textbf{6.98} & 95.24 & 8.74 & 94.71 & 8.34 & 95.01 & 8.29 \\ %(1.9230769230769231, 0.5399539953995399)
        & 90\%   & 90.02 & \textbf{4.67} & 90.51 & 6.52 & 89.56 & 6.45 & 90.01 & 6.28 \\ %(2.153846153846154, 0.22762276227622763)
     \hline
     \multirow{3}{*}{SVHN-10}     
        & 100\% & 100 & 4.27 & 99.97 & \textbf{3.86} & 100 & 4.27 & 100 & 4.01  \\ % (4.230769230769231, 0.5900590059005901)
%        & 97.5\% & 97.61 & 3.02 & 97.53 & 2.68 & 97.67 & 3.31 & 97.51 & 2.85 \\
%        & 97\% & 97.03 & 2.61 & 97.08 & 2.56 & 97.13 & 3.14 & 97.02 & 2.64 \\
        & 95\% & 95.05 & \textbf{1.83} & 95.06 & 1.86 & 95.14 & 2.53 & 95.01 & 2.07 \\ %(1.4615384615384617, 0.546954695469547)
        & 90\% & 90.09 & \textbf{1.01} & 89.99 & 1.04 & 90.14 & 1.31 & 90.01 & 1.06 \\  % (1.4615384615384617, 0.24762476247624762)
     \hline
     \multirow{3}{*}{Cats \& Dogs}     
        & 100\% & 100 & 5.93 & 100 & \textbf{5.72} & 100 & 7.36 & 100 & 6.16  \\
%        & 97.5\% &  &  & 97.52 & 4.38 & 97.96 & 6.3 & 97.72 & 5.36 \\
%        & 97\% &  &  & 97.01 & 4.24 & 97.38 & 6.14 & 97.26 & 5.2 \\
        & 95\% & 95.13 & \textbf{ 2.97 } & 95.02 & 3.46 & 95.21  & 5.1 & 95.1 & 4.28  \\
        & 90\% & 90.01 & \textbf{ 1.74 } & 90.02 & 2.28 & 90.18 & 3.3 & 90.02 & 2.5  \\ %
    \end{tabular}}
    \caption{Performance at High Target Coverage. Similarly to the previous table, this repeats Table \ref{table:multi_class_results_target_cov} but with the scan for the DG method corrected. Again note the reduced performance in the final column relative to Table \ref{table:multi_class_results_target_cov}.}
    \label{table:fixed_multi_class_results_target_cov_new_table}
\end{table*} 

\newpage
\section{A deeper look at Conformal Prediction in the context of Selective Classification}\label{appx:confidence_set_rant}

This section is aimed at developing a deeper exposition of conformal prediction in general, and the confidence set methods that originate within it. We will describe these methods in some detail, discuss  their applicability to the selective classification problem, and finally discuss in detail how our proposed methods differ from such methods. 

\subsection{Why are we discussing all this in such detail?}

This paper has been submitted to two conferences (AISTATS '21, and NeurIPS '20). In both review cycles, we received comments that asserted (with no justification) that the methods we proposed are not novel, and everything we discussed is already available in the conformal prediction literature, particularly under the confidence set formulation. As we have outlined above, this is not the case. Nevertheless, since this comment was repeated, we believe that this could be a common misconception. This discussion is thus an attempt to clarify the situation, and to concretely distinguish our work, and the selective classification literature at large, from the generic conformal prediction and confidence set framework and the work done within it. Note that at the end of the day, our intention is not to dismiss this work, or to call it 'wrong' in any way - these works study interesting and legitimate problems and methodologies. Rather, we are interested in clarifying the issues of the selective classification problem, and how our work contributes to methodology regarding it. 

Our argument for distinction relies on four points. These are discussed at length in \S\ref{appx:our_method_is_diff}.

\begin{itemize}
    \item Selective Classification is not well expressed as a generic confidence set problem.
    \item While the literature does contain a binary version of a confidence set problem that is equivalent to selective classification, the natural generalisation of this to multiclass selective classification has not been pursued.
    \item While the binary version does result in a decomposition to one-sided problems, these are very distinct from the one-sided problems we pursue.
    \item Further, these one-sided problems are not exploited enough in the literature - the approach taken to these involves plugging in a nonparametric estimate of the regression function. In contrast, we exploit the decomposition to give an effective method for training selective classifiers in a discriminative setting.
\end{itemize}

\subsection{Conformal Prediction and Confidence Sets in general}

Conformal prediction is a generic learning paradigm, which for a set of output values $\mathcal{Y} = [1:K]$ involves learning functions that map features to any subset of $\mathcal{Y}$ - more succinctly, a conformal predictor is a map $h: \mathcal{X} \to 2^{[1:K]}$, where $2^\mathcal{S}$ denotes the power set of $\mathcal{S}$. Equivalently, we can interpret a conformal predictor $h$ as a collection of sets $\{\mathcal{C}_k^h\}_{k \in [1:K]},$ where $\mathcal{C}_k^h = \{x: k \in h(x) \}$. Note that thus far all we've described is the structure of conformal predictors, or equivalently, set-valued predictors.\footnote{The name `conformal prediction' arises from the work of Vovk and collaborators  \cite[see, e.g.,][]{vovk2005algorithmic, shafer2008tutorial}, which uses a measure of conformity of a prediction to a dataset. Roughly, they propose the following algorithm - let $\mathscr{D} = \{(x_i,y_i)\}_{i \in [1:n]}$ be a dataset of labelled examples. Suppose we are to predict the label of a new point $x$. The algorithm uses a measure $A: (\mathcal{X} \times \mathcal{Y})^n \times (\mathcal{X} \times \mathcal{Y}) \to \mathbb{R}$ of how well a labelled point $(x,k)$ `conforms to' or looks similar to a dataset $\mathscr{D}$. The exact proposal is illustrated by the following meta-algorithm from the survey by Shafer and Vovk \cite{shafer2008tutorial}: let $\mathscr{D}_i^k$ be obtained by dropping the $i$th example in $\mathscr{D}$, and then including the example $(x,k)$ into $\mathscr{D}$. The algorithm first computes $\pi_k := \frac{1}{n}\sum_{i = 1}^n \mathbf{1}\{\mathscr{D}_i^k \ge \alpha_k\}$, and then the prediction $h(x) = \{y: \pi_k > t\}$. The meta-algorithm is parametrised by the function $A$, and the levels $(\alpha_k)_{k \in [1:K]}$ and $t$, and the main issue becomes how to set these. Note that many variations of this meta-algorithm with the same general flavour have been proposed.} This is extremely generic, and any learning problem can be written as a conformal prediction problem, with some constraints on the form of the predictor's output (see below).

\paragraph*{The Confidence Set Formulation of Conformal Prediction} The most common approach to conformal prediction is via the confidence set formulation. This demands learning a conformal predictor $h$ such that $\mathbb{P}(Y \not\in h(X)) \le \alpha,$ where $\alpha$ is a confidence level.\footnote{In an unfortunate collision of terminology, the quantity $1-\alpha$ is also commonly called coverage in the confidence set literature. Since we use coverage for a different quantity in selective classification, we will instead call $\alpha$ the confidence level.}

\paragraph*{Standard Learning as  a Confidence Set Problem} Notice that the confidence set formulation is a strict generalisation of standard learning - the latter amounts to learning a conformal predictor such that each of the resulting $\mathcal{C}_k^h$ are disjoint, and that $\bigcup_k \mathcal{C}_k^h = \mathcal{X}$. The former condition can be succinctly written as $\forall x: |h(x)| = 1$. With these restrictions, the confidence level corresponds exactly to error rate - indeed, for any conformal predictor $h$ such that $|h| = 1,$ we can define the standard classifier $g(x) = \sum k \mathbf{1}\{h(x) = \{k\} \}$, and then $\mathbb{P}(Y \not\in h(X) ) = \mathbb{P}( g(X) \neq Y).$ Thus constraining the confidence level corresponds exactly to constraining the error rate (although nothing is known of when this is achievable at a given level).

\paragraph*{Confidence Set Methods}

The most well studied meta-method for learning confidence set predictors is a plug-in approach - these propose first learning some estimate $\widehat{\mathbb{P}}(y|x)$ of the regression function $\mathbb{P}(y|x)$, and then constructing the sets $\mathcal{C}_k = \{ \widehat{\P}(k|x) > t_k\},$ where $t_k$ are thresholds to be chosen by the algorithm. Note that in general this may not yield a unique solution - one can trade off the $t_k$ between the $k$s to generate a variety of conformal predictions with the same confidence level. Such methods have been pursued in many recent papers \cite[e.g.,][]{Lei_class_w_confidence, LeiWasserman, denis-hebiri, denis_hebiri_multi, denis_hebiri_chzhen_minimax}.

These methods typically specify two things - first, an objective that a conformal predictor should attempt to optimise, whilst maintaining a confidence level (as a constraint), and second a way to pick the $t_k$s to fit a confidence level. Variations in both can be pursued - for the latter, the meta-method described by Shafer \& Vovk \cite{shafer2008tutorial} gives a way upon using $\widehat{\P}$ to express conformity, but further methods have been proposed. The works we cited above all use a procedure called split conformal prediction, which effectively uses a validation dataset culled from the training set to set these levels (this technique originates in the work of Lei, Wasserman, and collaborators). The work by Denis, Hebiri, and collaborators observe that in certain settings (discussed below), this procedure can also utilise semi-supervised data. 

\paragraph*{Two Objectives for Confidence set methods that are relevant to selective classification}$ $\\
\noindent  \emph{Classification with confidence} was pursued in the binary case by Lei \cite{Lei_class_w_confidence}, and further by Denis \&Hebiri \cite{denis-hebiri}. This consists of learning a conformal predictor $h: \mathcal{X} \to \{ \varnothing, \{1\}, \{2\}, \{1,2\}\}$ such that for every $x,|h(x)| \ge 1$. Equivalently, the sets $\mathcal{C}_1^h$ and $\mathcal{C}_2^h$ topologically cover the input space, which is a fancy way of saying that $\mathcal{C}_1^h \cup \mathcal{C}_2^h = \mathcal{X}$. The problem is then posed as learning a predictor that optimises the following program\footnote{the work \cite{denis-hebiri} controls $\P(Y \not\in h(X) | |h(X)| = 1)$. This corresponds to a conditional objective that we do not pursue in this work, and hence will not discuss further.  }\[ \min \P(|h(X)| = 2) \textrm{ s.t. } \forall x: |h(x)| \ge 1; \mathbb{P}(Y \not \in h(X)) \le \alpha.\]
    This formulation is particularly relevant because in the binary case it is equivalent to selective classification (see the following subsection). In the main text (\S\ref{sec:post_form_comparison}) we specified a natural $K$-ary extension of this problem, which minimises $\P(|h| > 1)$ subject to the confidence level, and to the constraint that $|h| \ge 1$ everywhere. While this is equivalent to selective classification, as we discussed in the main text, to our knowledge, this exact generalisation has not appeared in the literature. Instead, the following generalisation has been pursued.
    
\noindent \emph{Least Ambiguous Set-Valued Classifiers} (LASV classifiers) were proposed by Sadinle et al. \cite{LeiWasserman}, and in a dual form by Denis \& Hebiri \cite{denis_hebiri_multi}, and further studied by Chzhen et al. \cite{denis_hebiri_chzhen_minimax}. These generalise the above problem to multiclass settings as the program \[ \min \mathbb{E}[|h(X)|] \textrm{ s.t. } \P(Y \not\in h(X) ) \le \alpha.\] In the multiclass setting, we note that the LASV classification problem is not easily related to selective classification - one way to see this is that the classifiers favour outputting an empty set to a singleton, since this lowers the objective further. This leads to a structural difference in the solutions between selective classification and the LASV classification - in the latter, the sets $\mathcal{C}_k$ are biased to be as small as possible (in probability mass) subject to capturing a large amount of the mass of each class conditional law $\P(\cdot|Y = k)$ in $\mathcal{C}_k$ \footnote{Indeed, the confidence level constraint can be expressed as $\sum \P(Y = k) \P(X \in \mathcal{C}_k| Y = k) \ge 1-\alpha$, while the objective is to minimise $\sum \P(\mathcal{C}_k).$}. While Sadinle et al. \cite{LeiWasserman} discuss the version of the problem which further constrains $h$ such that $|h| \ge 1$ everywhere, notice that the objective remains mismatched, and the solution they present is ad hoc (they propose simply outputting the label given by a standard classifier when $|h| = \varnothing$). It is also worth noting that all of these papers use plug-in estimates of the regression function to fit levels.

Importantly, even in the binary case, the approach taken by Lei \cite{Lei_class_w_confidence} is similar in spirit to the approach for the LASV problem -  they first fit a $\widehat{\P}(y|x) \approx \P(y|x)$, and then determine sets $\mathcal{D}_1 = \{x: \widehat{\P}(1|x) \ge t_1\}, \mathcal{D}_2 = \{ \widehat{\P}(2|x) \ge t_2\}$ where $t_1$ and $t_2$ are the largest possible values such that $\mathcal{C}_1$ and $\mathcal{D}_2$ together already satisfy the confidence level constraint. To satisfy the condition that $|h| \ge 1,$ the final solution is presented as  $\mathcal{C}_1 = \mathcal{D}_1, \mathcal{C}_2 = \mathcal{D}_2 \cup \mathcal{D}_1^c$.\footnote{Of course, in the binary case, for reasonable values of the confidence level, $\mathcal{D}_1$ and $\mathcal{D}_2$ should already intersect, but the point here is that the approach in similar in spirit.}

\subsection{Selective Classification as a Conformal Prediction Problem}   Selective classification corresponds to relaxing the confidence sets so that the sets $\mathcal{C}_k^h$ need not cover the space, but they remain disjoint. Indeed, this is the structure of solutions that we adopt in the formulation (\ref{SC}). The condition can be equivalently stated as $\forall x: |h(x)| \le 1$. In this case $|h(x)| = 0$ is interpreted as the rejection event. In this notation, our formulation (\ref{SC}) can be equivalently stated as \[ \max \mathbb{P}(|h(X)| = 1) \textrm{ s.t. } \forall x :|h(x)| \le 1;  \mathbb{P}(|h(X)| = 1, Y \not\in h(X)) \le \varepsilon. \]

\paragraph*{Selective Classification is not well expressed as a generic confidence set problem}

We observe that for a selective classifier (i.e., a conformal predictor with $|h| \le 1,$), the confidence set criterion \[ \mathbb{P}(Y \not \in h(X)) =  \mathbb{P}(|h(x)| = 0) + \mathbb{P}( |h(X)| = 1, Y \not\in h(X))\] adds together the probability of error and of abstention. Consequently, demanding $P(Y \not \in h(X)) \le \alpha$ no longer controls just the error rate, but also the abstention rate. As such, this means that while selective classifiers can be represented naturally as conformal predictors, the task of learning them is not easily expressed as confidence set problems. This induces a non-trivial problem if one wishes to use a generic confidence set solver to solve selective classification - the specified value of the constraint is no longer just error control. Nevertheless, it is possible to state selective classification using confidence sets \emph{if a different form of the solutions is enforced}, as we will discuss below.

\paragraph*{Turning a generic conformal predictor into a selective classifier} We make a simple observation that an arbitrary conformal predictor can yield a selective classifier by replacing non-singleton outputs with an empty set. More formall, if $h$ is a conformal predictor, then  \[ g_h(x) = \begin{cases} h(x) & |h(x)| = 1 \\ \varnothing & |h(x)| > 1 \end{cases}\] is a selective classifier. However, as above, this does not allow the confidence set objective to express the selective classification constraint as in the above case.

\paragraph*{Using Classification with Confidence to express Selective Classification as a confidence set problem} In contrast, the classification with confidence framework does allow one to express the selective classification problem as a confidence set problem. This fundamentally relies on the constraint that $|h(x)| \ge 1$ everywhere, since this implies that \[ \P(Y \not\in h(X)) = \P(|h| = 0) + \P(Y \not\in h(X), |h(X)| \ge 1) = \P(Y \not\in h(X) , |h(X)| \ge 1).\] We discuss in the main text (\S\ref{sec:post_form_comparison}) how the resulting program is equivalent to our formulation of selective classification. The transformation we propose there corresponds precisely to the above $g_h$. Notice that when measuring against the objective $\P( |h| > 1)$, we get a transformation back from selective classification to classification with confidence: for a selective classifier $g$, output the classifier with confidence \[h_g(x) = \begin{cases} g(x) & |g(x)| = 1\\ [1:K] & g(x) = \varnothing \end{cases}. \] The same is, of course, not true when measuring against the LASV classification objective.

\subsection{Our Selective Classification Methods are Distinct from Previous Conformal Prediction Approaches}\label{appx:our_method_is_diff}

More formally, our approach to selective classification does not amount to simply taking a conformal predictor learnt using existing methods, and then applying the generic transformation to selective classifiers discussed previously. This point is made via four observations, 
\begin{itemize}
    \item \emph{Selective Classification is not well expressed as a generic confidence set problem} as discussed above. This immediately removes the bulk of the conformal prediction/confidence set prediction literature from consideration.
    \item \emph{Our binary approach is distinct from classification with confidence}. As previously discussed, the approach taken in the classification with confidence setup \cite{Lei_class_w_confidence} is to find sets $\mathcal{C}_1, \mathcal{C}_2$ that are as small as possible while covering large mass of the corresponding class conditional law. More formally (upon relaxing topological coverage), the underlying one-sided problem pursued in this work (and by the extensions to LASV classification) is to learn sets $\mathcal{C}_k$ such that solve \[ \min \P(X \in \mathcal{C}_k)\,\textrm{ s.t. } \P( Y = k, X \in \mathcal{C}_k^c) \le \alpha_k. \] In sharp contrast, our formulation sets up the one-sided tasks as learning sets $\mathcal{S}_1, \mathcal{S}_2$ that are as large as possible without obtaining too much mass in the complementary class conditional law, i.e. \[ \max \P(X \in \s_k) \textrm{ s.t. } \P(X \in \s_k, Y \neq k) \le \varepsilon_k.\] 
    
    The distinction between these programs can be clarified by treating $Y \neq k$ as a null hypothesis (since it is more prevalent than $Y = k$ in typical cases). With this viewpoint, the classification with confidence and LASV solutions pursue learning a small-mass decision region $\mathcal{C}_k$ that have low false negative rate, while our OSP method pursues learning a large-mass region $\s_k$ that has low false positive rate. We note that the task of learning a small region with low false negative rate bears strong connections to statistical formulations of anomaly detection \cite{chandola2009anomaly}.
    
    \item \emph{Our multiclass approach is distinct from LASV classification} Indeed, LASV classification does not express selective classification well, and further, a difference in approach as in the above case is evident in this multiclass setting too - the LASV solutions involve solving $K$ anomaly detection type problems like above, which are distinct from our one-sided problems.
    \item \emph{We exploit our one-sided decomposition to provide effective discriminative methods rather than just plug-in schema} None of the cited confidence set methods yield a practical way of training in order to determine $\widehat{\P}(y|x)$ - their theoretical analyses all depend on standard nonparametric analyses of learning the regression function, and their proposed methods are to learn some estimate of this, and then tune the levels $\alpha_k, t$ according to a split conformal prediction approach. In sharp contrast, we exploit our formulation and the consequent relaxation and decomposition to one-sided problems to provide a novel scheme for discriminative learning that can be applied to effective train large modern function classes such as neural networks. This represents a major methodological distinction.
\end{itemize}

To summarise, our work directly addresses selective classification, has solutions with differing approaches to classification with confidence proposals, and provides practical methodologies that go beyond plug-in estimation. A final point of distinction is that our study is parametric, concentrating on learning good classifiers compared to a given class of bounded complexity, whilst the conformal prediction methods discussed tend to concentrate on nonparametric settings.

\end{document}